\documentclass{article} 
\usepackage{iclr2024_conference,times}


\usepackage{amsmath,amsfonts,bm}


















\def\1{\bm{1}}











\DeclareMathAlphabet{\mathsfit}{\encodingdefault}{\sfdefault}{m}{sl}
\SetMathAlphabet{\mathsfit}{bold}{\encodingdefault}{\sfdefault}{bx}{n}

\iclrfinalcopy
\usepackage{hyperref}
 \hypersetup{
	colorlinks=true,
	linkcolor=orange,
	filecolor=magenta,      
	urlcolor=orange,
	citecolor=orange,
}
\usepackage{url}

\usepackage{url}            
\usepackage{booktabs}       
\usepackage{amsfonts}       
\usepackage{nicefrac}       
\usepackage{microtype}      

\usepackage{graphicx}
\usepackage{lscape}
\usepackage{rotating}
\usepackage[ruled, vlined, linesnumbered]{algorithm2e}
\usepackage{algorithmic}
\usepackage[utf8]{inputenc}
\usepackage{float}
\usepackage{url}            
\usepackage{xspace}
\usepackage{amsthm, amsmath, mathtools, amsfonts, amssymb, dsfont, enumitem, mathabx}
\usepackage{bm, bbm}
\usepackage[mathscr]{euscript}
\usepackage{adjustbox}
\usepackage{footmisc}
\usepackage[capitalize,noabbrev]{cleveref}
\usepackage{multirow}
\usepackage{transparent}
\usepackage{caption}
\usepackage{subcaption}
\usepackage{tcolorbox}

\usepackage{wrapfig}
\usepackage{enumitem}
\usepackage[font=small]{caption}
\usepackage{bbm}
\usepackage{pifont}
\usepackage{cancel}

\usepackage[ruled, vlined, linesnumbered]{algorithm2e}
\usepackage{algorithmic}
\usepackage{minitoc}
\setcounter{parttocdepth}{5}

\usepackage{thmtools, thm-restate}

\makeatletter
\def\thm@space@setup{%
  \thm@preskip=2pt
  \thm@postskip=2pt 
}
\makeatother

\theoremstyle{plain}
\theoremstyle{definition}

\theoremstyle{remark}

\definecolor{lightorange}{HTML}{ff7f2a}
\definecolor{lighterorange}{HTML}{ffe6d5}
\newtcolorbox{summarybox}{colback=lighterorange,colframe=lightorange}

\newcommand{\E}[2]{\mathbb{E}_{#1}{\left[#2\right]}}

\newcommand{\f}[2]{D_{f}(#1\ ||\ #2)}

\newcommand{\set}[1]{\left\{#1\right\}}

\newcommand{\para}[1]{\textbf{#1}}

\newcommand{\D}{\mathcal{D}}

\renewcommand{\S}{\mathcal{S}}
\newcommand{\A}{\mathcal{A}}

\renewcommand{\hat}{\widehat}
\renewcommand{\S}{\mathcal{S}}

\newcommand{\Mix}{\texttt{Mix}}
\newcommand{\dmix}{\Mix_\beta(d, \rho)(s, a, g)}
\newcommand{\demix}{\Mix_\beta(q, \rho)(s, a, g)}




\title{Score Models for Offline\\Goal-Conditioned Reinforcement Learning}


\author{Harshit Sikchi\thanks{Work done partially during an internship at Meta AI. Correspondence to hsikchi@utexas.edu}\\
University of Texas at Austin\\
\And
Rohan Chitnis\thanks{Equal Contribution}\\
Meta AI \\
\And
 Ahmed Touati\footnotemark[2]\\
Meta AI \\
\And
Alborz Geramifard \\
Meta AI \\
\And
 Amy Zhang \\
University of Texas at Austin, Meta AI\\
\And
 Scott Niekum \\
UMass Amherst\\
}

%

\newcommand{\reb}[1]{\textcolor{black}{#1}}
\newcommand{\scogo}{\texttt{SMORe}\xspace}

\usepackage{xcolor} 
\definecolor{mydarkblue}{rgb}{0,0.08,0.45}
\definecolor{myredorange}{rgb}{0.8,0.33,0}

\BeforeBeginEnvironment{wrapfigure}{\setlength{\intextsep}{1pt}}
\setlength{\intextsep}{1pt}%
\setlength{\belowdisplayskip}{1pt} 
\setlength{\belowdisplayshortskip}{1pt}
\setlength{\abovedisplayskip}{1pt} 
\setlength{\abovedisplayshortskip}{1pt}
\everypar{\looseness=-1}

\iclrfinalcopy 
\begin{document}

\maketitle

\begin{abstract}
Offline Goal-Conditioned Reinforcement Learning (GCRL) is tasked with learning to achieve multiple goals in an environment purely from offline datasets using sparse reward functions. Offline GCRL is pivotal for developing generalist agents capable of leveraging pre-existing datasets to learn diverse and reusable skills without hand-engineering reward functions. However, contemporary approaches to GCRL based on supervised learning and contrastive learning are often suboptimal in the offline setting. An alternative perspective on GCRL optimizes for occupancy matching, but necessitates learning a discriminator, which subsequently serves as a pseudo-reward for downstream RL. Inaccuracies in the learned discriminator can cascade, negatively influencing the resulting policy. We present a novel approach to GCRL under a new lens of mixture-distribution matching, leading to our discriminator-free method: SMORe. The key insight is combining the occupancy matching perspective of GCRL with a convex dual formulation to derive a learning objective that can better leverage suboptimal offline data. SMORe learns \textit{scores} or unnormalized densities representing the importance of taking an action at a state for reaching a particular goal. SMORe is principled and our extensive experiments on the fully offline GCRL benchmark composed of robot manipulation and locomotion tasks, including high-dimensional observations, show that SMORe can outperform state-of-the-art baselines by a significant margin. \\

\centering{\textbf{Project page (Code and Videos):} \href{https://hari-sikchi.github.io/smore/}{\color{myredorange}hari-sikchi.github.io/smore/}}
\end{abstract}

\section{Introduction}

A generalist agent will require a vast repertoire of skills, and large amounts of offline pre-collected data offer a way to learn useful skills without any environmental interaction.  Many subfields of machine learning like vision and NLP have enjoyed great success by designing objectives to learn a general model from large and diverse datasets. In robot learning, offline interaction data has become more prominent in the recent past~\citep{ebert2021bridge}, with the scale of the datasets growing consistently~\citep{walke2023bridgedata, padalkar2023open}. Goal-conditioned reinforcement learning (GCRL) offers a principled way to acquire a variety of useful skills without the prohibitively difficult process of hand-engineering reward functions. In GCRL, the agent learns a policy to accomplish a variety of goals in the environment. The rewards are sparse and goal-conditioned: 1 when the agent's state is proximal to the goal and 0 otherwise. However, the benefit of not requiring the designer to hand-engineer dense reward functions can also be a curse, because learning from sparse rewards is difficult. Driving progress in fundamental offline GCRL algorithms thus becomes an important aspect of moving towards performant generalist agents whose skills scale with data.

Despite recent progress in developing methods for goal-reaching in the online setting (where environment interactions are allowed), a number of these methods are either suboptimal in the offline setting or suffer from learning difficulties. Prior GCRL algorithms can largely be classified into one of three categories: iterated behavior cloning, RL with sparse rewards, and contrastive learning. Iterated behavior cloning or goal-conditioned supervised learning approaches~\citep{ghosh2019learning,yang2019imitation} have been shown to be provably suboptimal~\citep{eysenbach2022imitating} for GCRL. Modifying single-task RL methods~\citep{silver2014deterministic,kostrikov2021offline} for GCRL with 0-1 reward implies learning a $Q$-function that predicts the discounted probability of goal reaching, which makes it essentially a density model. Modeling density directly is a hard problem, an insight which has prompted the development of methods~\citep{eysenbach2020c} that learn density-ratio instead of densities, as classification is an easier problem than density estimation. Contrastive RL approaches to GCRL~\citep{eysenbach2020c,eysenbach2022contrastive,zheng2023stabilizing} aim to do precisely this and are the main methods to enjoy success for applying GCRL in high-dimensional observation spaces. However, when dealing with offline datasets, contrastive RL approaches~\citep{eysenbach2022contrastive,zheng2023stabilizing} are suboptimal, as they only learn a policy that is a greedy improvement over the Q-function of the data generation policy. This begs the question: \emph{How can we derive a performant GCRL method that learns near-optimal policies from offline datasets of suboptimal quality?} 

In this work, we leverage the underexplored insight of formulating GCRL as an occupancy matching problem. Occupancy matching between the joint state-action-goal visitation distribution induced by the current policy and the distribution over state-actions that transition to goals can be shown to be equivalent to optimizing a max-entropy GCRL objective. Occupancy matching has been studied extensively in imitation learning~\citep{ghasemipour2020divergence} and often requires learning a discriminator and using the learned discriminator for downstream policy learning through RL. Indeed, a prior GCRL work~\citep{ma2022far}  explores a similar insight. Unfortunately, errors in learned discriminators can compound and adversely affect the learned policy's performance, especially in the offline setting where these errors cannot be corrected with further interaction with the environment.


Going beyond the shortcomings of the previous methods, our proposed method combines the insight of formulating GCRL as an occupancy matching problem along with an efficient, discriminator-free dual formulation that learns from offline data. The resulting algorithm \scogo forgoes learning density functions or classifiers, but instead learns unnormalized densities or \textit{scores} that allow it to produce near-optimal goal-reaching policies. The scores are learned via a Bellman-regularized contrastive procedure that makes our method a desirable candidate for GCRL with high-dimensional observations, avoiding the need for density modeling. Our experiments represent a wide variety of goal-reaching environments -- consisting of robotic arms, anthropomorphic hands, and locomotion environments. We lay out the following contributions: 1) on the extended offline GCRL benchmark, our results demonstrate that \scogo significantly outperforms prior methods in the offline GCRL setting. 2) In line with our hypothesis, discriminator-free training makes \scogo particularly robust to decreasing goal-coverage in the offline dataset, a property we demonstrate in the experiments. 3) We test \scogo for zero-shot GCRL on a prior benchmark~\citep{zheng2023stabilizing} for high dimensional vision-based GCRL where contrastive RL approaches are the only class of GCRL methods that have been successful, and show improved performance over other state-of-the-art baselines.

\section{Problem Formulation}
\reb{We consider an infinite horizon discounted Markov Decision Process denoted by the tuple $\mathcal{M} = (\mathcal{S}, \mathcal{A}, p, r, \gamma, d_0)$, where $\S$ is the state space, $\A$ is the action space, $p$ is the transition probability function, $r: \S \times \A \rightarrow \mathbb{R} $ is the reward function, $\gamma \in (0, 1)$ is the discount factor, and $d_0$ is the initial state distribution. We constrain ourselves to the goal-conditioned RL setting, where we additionally assume a goal space $\mathcal{G}$ where states in $\mathcal{S}$ are mapped to the goal space using a known mapping:  $\phi:\mathcal{S} \rightarrow \mathcal{G}$. The reward function $r(s,a,g)$ in GCRL is sparse and also depends on the goal. A goal conditioned policy $\pi:\mathcal{S}\times \mathcal{G} \rightarrow \Delta(\mathcal{A})$ outputs a distribution over actions in a given state conditioned on a goal. Given a distribution over desired evaluation goals $q^{\texttt{test}}(g)$, the objective of goal-conditioned RL is to find a policy $\pi_g$\footnote{We use the subscript g to make the policy's conditioning on g explicit.} that maximizes the expected discounted return:}
\vspace{-0.1cm}
\begin{equation}
\label{eq:gcrl-objective}
    J(\pi_g) \coloneqq \mathbb{E}_{g\sim q^{\texttt{test}}((g)),s_0\sim d_0, a_t \sim \pi_g}\left[\sum_{t=0}^{\infty} \gamma^t r(s_t,a_t, g)\right].
\end{equation}
We denote by $P^{\pi_g}$ the transition operator induced by the policy $\pi_g$ defined as $P^{\pi_g}S(s, a, g) \coloneqq \E{s' \sim p(\cdot | s, a), a' \sim \pi_g(\cdot | s',g) }{ S(s',a',g) }$ , for any \textit{score} function $S:\mathcal{S} \times \mathcal{A} \times \mathcal{G}  \rightarrow \mathbb{R}$. \reb{We use $d^\pi(s,a\mid g)$ to denote the discounted goal-conditioned state-action occupancy distribution of $\pi_g$, i.e $d^{\pi_g}(s,a\mid g)=(1-\gamma)\pi(a|s,g)\sum_{t=1}^{\infty}[\gamma^t \text{Pr}(s_t=s|\pi_g, d_0)]$.}
which represents the expected discounted time spent in each state-action pair by the policy $\pi_g$ conditioned on the goal $g$. 
For complete generality, in GCRL, the distribution of goals the policy is trained on often differs from the test goal distribution. To make this distinction clear we define the training distribution $q^{\texttt{train}}(g)$, a uniform measure over goals we desire to learn to optimally reach during training. We write $d^{\pi_g}(s,a,g) = q^{\texttt{train}}(g)d^{\pi_g}(s,a\mid g)$ as the joint state-action-goal visitation distribution of the policy $\pi_g$ under the training goal distribution. A state-action-goal occupancy distribution must satisfy the \textit{Bellman flow constraint} in order for it to be a valid occupancy\footnote{We will use ``occupancy'' and ``visitation'' interchangeably.} distribution for some stationary policy $\pi_g$, $\forall s \in \mathcal{S},a \in \mathcal{A}, g\in \mathcal{G}$: 
\begin{align}
\label{eq:bellman-flow-constraint}
 d(s,a,g) & = (1-\gamma) d_0(s,g)\pi_g(a\mid s,g) + \gamma \sum_{s',a'}p(s\mid s', a') d(s', a', g) \pi_g(a\mid s,g),~~ 
\end{align}
where $d_0(s,g)=d_0(s) q^{\texttt{train}}(g)$. Finally, given $d^{\pi_g}$, we can express the learning objective for the GCRL agent under the training goal distribution as $J^{\texttt{train}}(\pi_g) = \frac{1}{1-\gamma} \mathbb{E}_{(s,a,g) \sim d^{\pi_g}}[r(s,a,g)]$. 

\reb{In this work, we focus on the offline setup where the agent cannot interact with the environment $\mathcal{M}$ and instead has access to a offline dataset of $\mathcal{D} \coloneqq \{\tau_i\}_{i=1}^N$, where each trajectory $\tau^{(i)} = (s_0^{(i)},a_0^{(i)}, r_0^{(i)}, s_1^{(i)},...; g^{(i)})$ with $s_0^{(i)} \sim d_0$.} The trajectories are usually relabelled with the $q^{\texttt{train}}(g)$ during learning. We denote the joint state-action-goal distribution of the offline dataset $\mathcal{D}$ as $\rho(s,a,g)$. 

\section{ Score-models for Offline Goal Conditioned Reinforcement Learning}
\label{sec:method}

In this section, we introduce our method in two parts: First, we build up the equivalence of the GCRL objective to the occupancy matching problem in Section~\ref{sec:gcrl_occupancy}, and then we derive a discriminator-free dual objective for solving the occupancy matching problem using off-policy data in Section~\ref{sec:scogo_method}. 
Finally, we present the algorithm for \scogo under practical considerations in Section~\ref{sec:scogo_algorithm}.

\subsection{GCRL as an occupancy matching problem}
\label{sec:gcrl_occupancy}

 Define a \textit{goal-transition distribution} $q(s,a,g)$ in a stochastic MDP as $q(s,a,g) ~\propto ~q^{\texttt{train}}(g)\E{s' \sim p(\cdot \mid s, a)}{\mathbb{I}_{\phi(s')=g}}$. Intuitively, the distribution has probability mass on each transition that leads to a goal.
We formulate the GCRL problem as an occupancy matching problem by searching for the policy $\pi_g$ that minimizes the discrepancy between its state-action-goal occupancy distribution and the goal-transition distribution $q(s, a, g)$:
\begin{equation}
    \label{eq:multi-task-imitation}
   \texttt{ Occupancy matching problem:}~~ \D_f(d^{\pi_g}(s,a,g) \| q(s,a,g)),
\end{equation}
where $D_f$ denotes an $f$-divergence with generator function $f$.
Note that the $q$ distribution is potentially unachievable by any goal-conditioned policy $\pi_g$. Firstly, it does not account for the initial transient phase that the policy must navigate to reach the desired goal. Secondly, even if we consider only the stationary regime (when $\gamma \rightarrow 1$), it may not be dynamically possible for the policy to continuously remain at the goal and rather necessitate cycling around the goal.
However, in Proposition~\ref{lemma:gcrl-imitation}, we show that the occupancy matching in Eq.~\ref{eq:multi-task-imitation} offers a principled objective since it forms a lower bound to the max-entropy GCRL problem.

\begin{restatable}[]{proposition}{gcrlimitationf}
\label{lemma:gcrl-imitation}
Consider a stochastic MDP, a stochastic policy $\pi$, and a sparse reward function $r(s,a,g)=\E{s'\sim p(\cdot|s,a)}{\mathbb{I}(\phi(s')=g,q^{\texttt{train}}(g)>0)}$ where $\mathbb{I}$ is an indicator function. Define a soft goal transition distribution to be $q(s,a,g)~\propto ~\text{exp}(\alpha ~r(s,a,g))$. The following bounds hold for any $f$-divergence that upper bounds KL-divergence (eg. $\chi^2$, Jensen-Shannon):
\begin{equation}
      J^{train}(\pi_g) + \frac{1}{\alpha}\mathcal{H}(d^{\pi_g}) \ge - \frac{1}{\alpha}\D_f(d^{\pi_g}(s,a,g) \| q(s,a,g))+C,
\end{equation}
where $\mathcal{H}$ denotes the entropy, $\alpha$ is a temperature parameter and $C$ is the partition function for $e^{R(s,a,g)}$. Furthermore, the bound is tight when $f$ is the KL-divergence.
\end{restatable}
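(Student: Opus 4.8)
The plan is to recognize the claimed inequality as an instance of the Gibbs variational identity that links soft (max-entropy) reinforcement learning to the minimization of a KL divergence against an exponentially tilted target. Concretely, I would first prove that the bound holds with \emph{equality} when $f$ is the KL divergence, and then upgrade it to the stated inequality for any $f$-divergence dominating KL by a one-line monotonicity argument. The two ingredients are thus a direct algebraic expansion and the domination hypothesis on $f$; no dynamic-programming or Bellman-flow machinery is needed.

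For the KL case, I would write the soft goal-transition distribution explicitly as $q(s,a,g) = \tfrac{1}{Z}\exp(\alpha\, r(s,a,g))$, where $Z = \sum_{s,a,g}\exp(\alpha\, r(s,a,g))$ is the partition function, so that $\log q(s,a,g) = \alpha\, r(s,a,g) - \log Z$. Substituting this into the definition of the KL divergence and splitting the logarithm gives
\begin{equation}
\kl{d^{\pi_g}}{q} = \E{(s,a,g)\sim d^{\pi_g}}{\log d^{\pi_g}(s,a,g)} - \alpha\,\E{(s,a,g)\sim d^{\pi_g}}{r(s,a,g)} + \log Z .
\end{equation}
The first term is $-\mathcal{H}(d^{\pi_g})$ by definition of the entropy, and the expected-reward term is the GCRL return $J^{\texttt{train}}(\pi_g)$ up to the standard normalization factor. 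Dividing through by $-\alpha$ (legitimate since $\alpha>0$) and rearranging yields
\begin{equation}
J^{\texttt{train}}(\pi_g) + \tfrac{1}{\alpha}\mathcal{H}(d^{\pi_g}) = -\tfrac{1}{\alpha}\kl{d^{\pi_g}}{q} + C, \qquad C \coloneqq \tfrac{1}{\alpha}\log Z ,
\end{equation}
which is the asserted equality in the KL case and pins down $C$ as the ($\alpha$-scaled logarithm of the) partition function, independent of $\pi_g$.

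To obtain the general statement, I would invoke the hypothesis that $f$ dominates KL, i.e. $\D_f(d^{\pi_g}\|q) \ge \kl{d^{\pi_g}}{q}$. Multiplying by $-\tfrac{1}{\alpha}<0$ reverses the inequality, so $-\tfrac{1}{\alpha}\kl{d^{\pi_g}}{q} \ge -\tfrac{1}{\alpha}\D_f(d^{\pi_g}\|q)$; chaining this with the KL equality above gives the desired lower bound, with tightness holding precisely when $f$ is KL.

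The computation itself is routine, so the only points demanding genuine care are bookkeeping ones, which is where I expect any friction. First, I must track the normalization relating $\E{d^{\pi_g}}{r}$ to $J^{\texttt{train}}(\pi_g)$ (the factor $1-\gamma$ coming from $J^{\texttt{train}}(\pi_g)=\tfrac{1}{1-\gamma}\E{d^{\pi_g}}{r}$) and absorb it consistently so that the entropy-regularized return appears in exactly the stated form. Second, I must verify that $C$ is genuinely constant in $\pi_g$, which holds because $Z$ depends only on $r$ and not on the occupancy $d^{\pi_g}$; this is what permits treating it as an additive constant rather than a policy-dependent term. The examples $\chi^2$ and Jensen--Shannon are used only to illustrate the domination hypothesis $\D_f \ge \kl{}{}$ — for $\chi^2$ this is the classical inequality $\mathrm{KL}\le\chi^2$ — but the argument never relies on anything beyond the abstract domination assumption.
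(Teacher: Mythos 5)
Your proposal is correct and follows essentially the same route as the paper's proof: both expand the KL divergence against the exponentially tilted target $q \propto e^{\alpha r}$ to obtain the exact identity $J^{\texttt{train}}(\pi_g) + \frac{1}{\alpha}\mathcal{H}(d^{\pi_g}) = -\frac{1}{\alpha}D_{\mathrm{KL}}(d^{\pi_g}\,\|\,q) + \frac{1}{\alpha}\log Z$, and then invoke the domination hypothesis $D_f \ge D_{\mathrm{KL}}$ (with the sign flip from $-\frac{1}{\alpha}$) to get the general bound, tight at KL. The only cosmetic difference is direction of expansion (you start from the KL, the paper starts from $\alpha J(\pi_g)$), and you are in fact slightly more careful than the paper about the $\frac{1}{1-\gamma}$ normalization relating $\mathbb{E}_{d^{\pi_g}}[r]$ to $J^{\texttt{train}}(\pi_g)$, which the paper elides.
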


\reb{\cite{ma2022far} (in Proposition 4.1) presented a similar result connecting state-goal distribution matching ( i.e $D_{KL}(d^{\pi}(s,g)||q(s,g))$) to GCRL objective and Proposition~\ref{lemma:gcrl-imitation} extends their results to goal-transition distribution matching. Matching action-free distributions necessitates constructing a loose lower bound that is tractable to optimize. By considering goal-transition distributions we sidestep constructing a loose lower bound and instead directly obtain a tractable distribution matching objective~\citep{ghasemipour2020divergence, kostrikov2019imitation} that is tight under KL-divergence.}

\begin{figure*}
\begin{center}
          \includegraphics[width=0.80\linewidth]{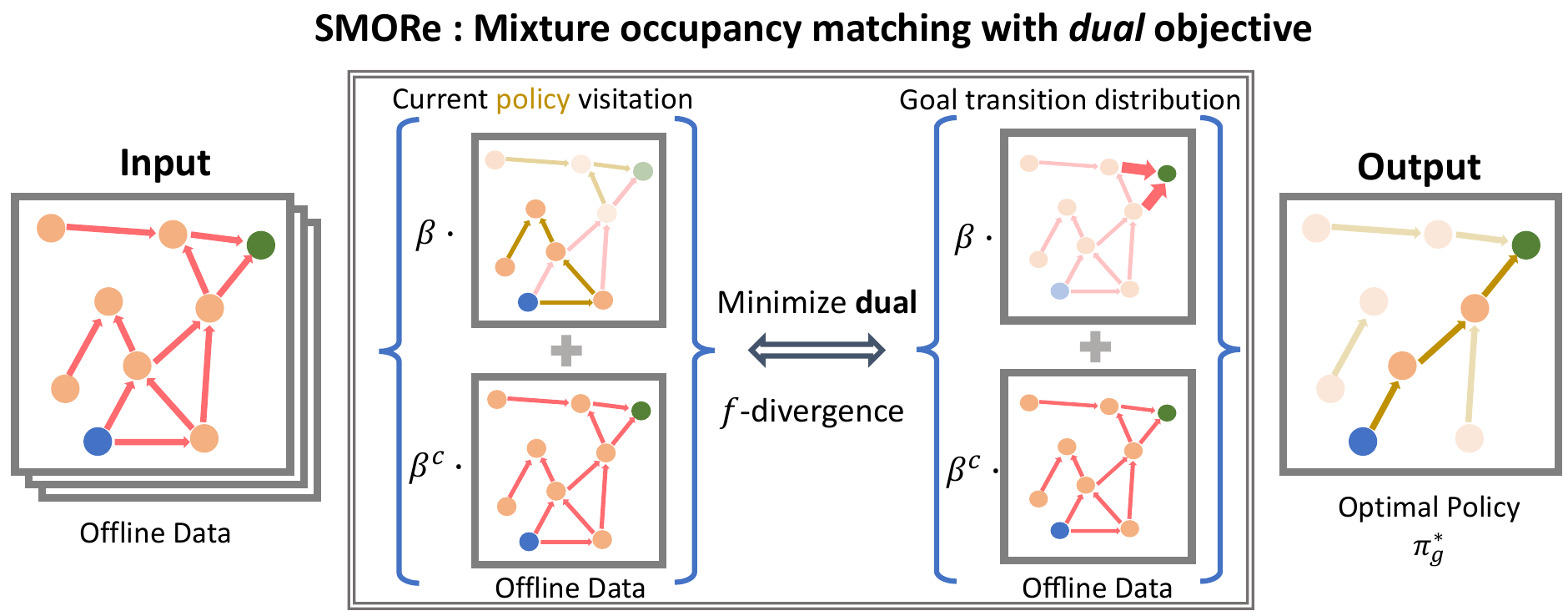}
\end{center}
\vspace{-3.0mm}
\caption{Illustration of the \scogo objective where $\beta^c=1-\beta$: \scogo matches a mixture distribution of current policy and offline data to a mixture of the goal-transition distribution and offline data in order to find the optimal goal reaching policy.}
\label{fig:main_fig}
\vspace{-0.5cm}
\end{figure*}
How does converting a GCRL objective to an imitation learning objective make learning easier? Estimating the $f$-divergence still requires estimating the joint policy visitation probabilities $d^{\pi_g}(s,a,g)$, which itself presents a challenging problem. We show in the following section that we can leverage convex duality to transform the imitation learning problem into an off-policy optimization problem, removing the need to sample from $d^{\pi_g}(s,a,g)$ whilst being able to leverage offline data collected from arbitrary sources.

\subsection{SMORe: A Dual Formulation for Occupancy Matching}
\label{sec:scogo_method}
The previous section establishes GCRL as an occupancy matching problem (Eq.~\ref{eq:multi-task-imitation}) but provides no way to use offline data whose joint visitation distribution is given by $\rho(s,a,g)$. To leverage offline data to learn performant goal-reaching policies, we consider a surrogate objective to the occupancy matching learning problem by matching \textit{mixture} distributions:
\begin{equation}
\label{eq:matching_mixture_distributions}
    \min_{\pi_g} \D_f(\Mix_\beta(d^{\pi_g}, \rho)  \| \Mix_\beta(q, \rho) ),
\end{equation}
where for any two distributions $\mu_1$ and $\mu_2$, $\Mix_\beta(\mu_1, \mu_2)$ denotes the mixture distribution with coefficient $\beta\in (0,1]$ defined as $\Mix_\beta(\mu_1, \mu_2) = \beta \mu_1 + (1-\beta) \mu_2$. Proposition~\ref{lemma:gcrl-imitation-mixture} (in appendix) shows the matching mixture distribution\footnote{Note that Eq.~\ref{eq:matching_mixture_distributions} shares the same global optima as the previous occupancy matching objective at $d^\pi_g(s,a,g)=q(s,a,g)$ when $q$ is an achievable visitation under some policy and recovers the original objective in Eq.~\ref{eq:multi-task-imitation} when $\beta=1$.} provably maximizes a lower bound to the Lagrangian relaxation of the max-entropy GCRL objective subject to a dataset regularization constraint.
We can rewrite the mixture occupancy matching objective as a convex program with linear constraints~\citep{manne1960linear,nachum2020reinforcement}:
\begin{align}
\label{eq:primal_gcrl_f_mixture}
 &~~~~~~~~~~~~~~~~~~~~~~~~~~~~~~~~~~~~\max_{\pi_g, d} - \D_f(\Mix_\beta(d, \rho)  \| \Mix_\beta(q, \rho)) \nonumber\\
    &~~\text{s.t}~\textstyle d(s,a,g)=(1-\gamma)d_0(s,g)\pi(a|s)+\gamma \sum_{s' \in \S } d(s',a',g) p(s|s',a')\pi(a'|s',g), \; \forall s \in \S  .
\end{align}
An illustration of this objective can be found in \cref{fig:main_fig}.
Effectively, we have simply rewritten Eq.~\ref{eq:matching_mixture_distributions} into an equivalent problem by considering an arbitrary probability distribution $d(s,a,g)$ in the optimization objective, only to later constrain it to be a valid probability distribution induced by some policy $\pi_g$ using the \textit{Bellman-flow constraints}. The motivation behind this construction of the primal form is that we have made computing the Lagrangian-dual easier as this objective is convex with linear constraints. Theorem~\ref{lemma:dual_gcrl} shows that we can leverage tools from convex duality to obtain an unconstrained dual problem that does not require computing $d^{\pi_g}(s,a,g)$ or sampling from it, while effectively leveraging offline data. 

\begin{restatable}[]{theorem}{scogoQ}
    \label{lemma:dual_gcrl}
    The dual problem to the primal occupancy matching objective (Equation ~\ref{eq:primal_gcrl_f_mixture}) is given by:
    \begin{align}
    \label{eq:scogo}
        \max_{\pi_g}\min_{S}  \beta (1-\gamma)& \E{d_0,\pi_g}{S(s,a,g)}  +\E{\Mix_\beta(q, \rho)}{f^*(\gamma P^{\pi_g} S(s,a,g )-S(s,a,g))}  \\
        & -(1-\beta) \E{\rho}{\gamma P^{\pi_g} S(s,a,g)-S(s,a,g)}, \nonumber
    \end{align}
where $f^*$ is conjugate function of $f$ and $S$ is the Lagrange dual variable defined as $S:\mathcal{S}\times\mathcal{A}\times\mathcal{G}\to \mathbb{R}.$ Moreover, as strong duality holds from Slater's conditions the primal and dual share the same optimal solution $\pi^*_g$ for any offline transition distribution $\rho$.
\end{restatable}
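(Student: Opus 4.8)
The plan is to read \cref{eq:primal_gcrl_f_mixture} as a constrained convex program in the occupancy variable $d$ for each fixed $\pi_g$ and to compute its Lagrangian dual with respect to the Bellman-flow constraints of \cref{eq:bellman-flow-constraint}. First I would attach a multiplier $S(s,a,g)$ to each constraint and form
\begin{align*}
\mathcal{L}(d,\pi_g,S) = {}& -\D_f(\Mix_\beta(d,\rho)\,\|\,\Mix_\beta(q,\rho)) \\
&+ \sum_{s,a,g}S(s,a,g)\Big[(1-\gamma)d_0(s,g)\pi_g(a\mid s,g) \\
&\quad + \gamma\sum_{s',a'} p(s\mid s',a')\,d(s',a',g)\,\pi_g(a\mid s,g) - d(s,a,g)\Big].
\end{align*}
For fixed $\pi_g$ the objective is concave in $d$ and the constraints are linear, so a strictly feasible occupancy exists whenever $\rho$ has full support and Slater's condition holds; strong duality then lets me exchange the inner $\max_d$ and $\min_S$ while keeping $\max_{\pi_g}$ outside.

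Next I would collapse the constraint terms using the adjoint of the transition operator. The initial-distribution term gives $(1-\gamma)\E{d_0,\pi_g}{S}$ directly. For the transition term I would pull $\sum_{s',a',g}d(s',a',g)$ to the outside and recognize $\sum_{s,a}p(s\mid s',a')\pi_g(a\mid s,g)S(s,a,g)=P^{\pi_g}S(s',a',g)$ by the definition of $P^{\pi_g}$ given in the problem formulation; together with the $-d(s,a,g)$ term this yields $\E{d}{\gamma P^{\pi_g}S-S}$. The Lagrangian thus reduces to $(1-\gamma)\E{d_0,\pi_g}{S}+\E{d}{\gamma P^{\pi_g}S-S}-\D_f(\Mix_\beta(d,\rho)\|\Mix_\beta(q,\rho))$.

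The crux is the inner maximization over $d$. I would change variables to the mixture $m_d=\Mix_\beta(d,\rho)=\beta d+(1-\beta)\rho$, so that $d=(m_d-(1-\beta)\rho)/\beta$, and abbreviate $\delta=\gamma P^{\pi_g}S-S$. Writing $\D_f(m_d\|m_q)=\sum_{s,a,g}m_q(s,a,g)\,f(m_d/m_q)$ with $m_q=\Mix_\beta(q,\rho)$, the objective decouples pointwise: at each $(s,a,g)$ the problem becomes $\max_{r}\,[\,r\,(\delta/\beta)-f(r)\,]$ with $r=m_d/m_q$, which by the definition of the convex conjugate equals $f^*(\delta/\beta)$, while the $-(1-\beta)\rho/\beta$ piece contributes the linear residual $-\tfrac{1-\beta}{\beta}\E{\rho}{\delta}$. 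This leaves $\max_{\pi_g}\min_S\,(1-\gamma)\E{d_0,\pi_g}{S}+\E{m_q}{f^*(\delta/\beta)}-\tfrac{1-\beta}{\beta}\E{\rho}{\delta}$.

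Finally I would rescale the dual variable by a factor $\beta$ (replace $S$ by $\beta S$, a bijection of the feasible set since $\beta>0$), which turns $f^*(\delta/\beta)$ into $f^*(\gamma P^{\pi_g}S-S)$, the first term into $\beta(1-\gamma)\E{d_0,\pi_g}{S}$, and the residual into $-(1-\beta)\E{\rho}{\gamma P^{\pi_g}S-S}$, matching \cref{eq:scogo} exactly; the assertion that the primal and dual share the same optimizer $\pi^*_g$ follows from the strong duality already invoked. The main obstacle I anticipate is precisely this inner maximization over $d$: carrying the mixture change of variables so that the $\beta$ and $(1-\beta)$ factors land in the right places, and justifying that the unconstrained pointwise maximizer is admissible — i.e.\ that the induced $d$ remains nonnegative and in the effective domain of $f$ — which for conjugates such as $\chi^2$ and KL can be verified from the explicit form of $(f')^{-1}$.
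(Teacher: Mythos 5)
Your proof is correct and follows essentially the same route as the paper: Lagrangian duality over the Bellman-flow constraints, a strong-duality swap of the inner $\max_d$ and $\min_S$, and pointwise convex conjugacy applied to the mixture variable $\Mix_\beta(d,\rho)$ against $\Mix_\beta(q,\rho)$. Your explicit change of variables $m_d=\beta d+(1-\beta)\rho$ followed by the final rescaling $S\mapsto\beta S$ makes precise exactly where the $\beta$ and $(1-\beta)$ factors come from (the paper absorbs this rescaling implicitly, presenting the $\beta$-weighted Lagrangian line as a bare equality), and your closing caveat --- that the pointwise maximizer only respects nonnegativity of the mixture ratio rather than $d\ge 0$ itself --- correctly flags the same relaxation the paper's proof makes when it optimizes over $\Mix_\beta(d,\rho)\ge 0$.
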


To our knowledge, the closest prior works to our proposed method are GoFAR~\citep{ma2022far} and Dual-RL~\citep{sikchi2023dual}. GoFAR considers the special case of KL-divergence for the imitation formulation and derives a dual objective that requires learning the density ratio $\frac{\rho(s,g)}{q(s,g)}$ in the form of a discriminator and using this as a pseudo-reward. This leads to compounding errors in the downstream RL optimization when learning the density ratio is challenging, e.g. in the case of low coverage between $\rho(s,a,g)$ and $q(s,a,g)$. We show this phenomenon experimentally in Section~\ref{sec:scogo_robustness}. Dual-RL~\citep{sikchi2023dual} uses convex duality for matching visitation distribution of realizable expert demonstrations and does not deal with the GCRL setting. \emph{Our contribution is a novel method for GCRL that is discriminator-free, applicable for a number of $f$-divergences, and robust to low coverage of goals in the offline dataset.}

\textbf{Sampling from the goal-transition distribution: } Goal relabelling is an effective technique to address reward sparsity by widening the training goal distribution $q^{\texttt{train}}(g)$. It utilizes knowledge about reaching other goals, possibly unrelated to test goals, to help in reaching the test distribution of goals $q^\texttt{{test}}(g)$.  In the most general case, $q^{\texttt{train}}(g)$ can be set to a uniform distribution over goals corresponding to all the states in the offline data. A common method, Hindsight Experience Replay (HER)~\citep{andrychowicz2017hindsight} chooses a training goal distribution that depends on the current sampled state from the offline dataset as well as the data-collecting policies. In this setting, the sampling distribution used for training Eq~\ref{eq:scogo}, $\rho(s,a,g)$, can no longer be factorized into $\rho(s,a)$ and $q^{\texttt{train}}(g)$, as goals are conditionally dependent on state-actions. However, our formulation can naturally account for learning from such relabelled data as the \scogo objective in Eq~\ref{eq:scogo} is derived considering the joint distribution $\rho(s,a,g)$. In this setting, we construct our goal transition distribution $q(s,a,g)$ as the uniform distribution over all transitions that lead to the goals selected by the HER procedure --- in practice, this amounts to first selecting $g$ through HER and then selecting $\set{s,a}$ that transitions to the selected goal from the offline dataset to get a sample $\set{s,a,g}$ from goal transition distribution. We emphasize that relabelling does not change the test distribution of goals, which is an immutable property of the environment.

\subsection{Practical Algorithm}
\label{sec:scogo_algorithm}

To devise a stable learning algorithm we consider the Pearson $\chi^2$ divergence. Pearson $\chi^2$ divergence has been found to lead to distribution matching objectives that are stable to train as a result of a smooth quadratic generator function $f$~\citep{garg2021iq,al2023ls,sikchi2023dual}.  Our dual formulation \scogo simplifies to the following objective:
\begin{multline}
\label{eq:contrastive_scogo}
\resizebox{0.92\textwidth}{!}{
   $\max_{\pi_g}\min_{S}  \color{purple}\overbrace{\color{black} \beta (1-\gamma)\E{(s, g) \sim d_0, a \sim \pi_g(\cdot \mid s, g)}{S(s, a,g)} +\beta \gamma \E{(s,a,g) \sim q,s'\sim p(\cdot|s,a), a' \sim \pi_g(\cdot \mid s', g)} { S(s',a',g )}}^{\text{Decrease score at transitions under current policy $\pi_g$}}$}\\
        \resizebox{0.92\textwidth}{!}{$-\color{teal}\underbrace{\color{black}\beta \E{(s,a,g) \sim q}{S(s,a,g)}}_{\text{Increase score at the proposed goal transition distribution}}\color{black}+0.25\color{brown}\underbrace{\color{black}\E{(s,a,g) \sim \Mix_{\beta}(q, \rho)}{( \gamma S(s',\pi_g(s'),g )-S(s,a,g))^2}}_{\text{Smoothness/Bellman regularization}}.$}
\end{multline}
\begin{wrapfigure}{r}{0.45\textwidth}
\centering
\begin{minipage}[t]{.45\textwidth}
\begin{algorithm}[H]
    \small
    \caption{SMORe}
    \label{alg:ALG1}
    \begin{algorithmic}[1]
    \STATE Init $S_\phi$, $M_\psi$, and $\pi_\theta$ \STATE Params: expectile $\tau$, mixture ratio $\beta$, temperature $\alpha$
    \STATE Let $\mathcal{D}=\hat{\rho} = \{(s, a, s', g)\}$ be an offline dataset and $q$ be goal-transition distribution
    \FOR{$t=1..T$ iterations}
        \STATE Train $S_\phi$ via Eq.~\ref{eq:practical_s_update}
        \STATE Train $M_\psi$ via  Eq.~\ref{eq:practical_m_update} 
        \STATE Update $\pi_\theta$ via Eq.~\ref{eq:practical_pi_update} 
    \ENDFOR
    \end{algorithmic}
\end{algorithm}
\end{minipage}
\end{wrapfigure}
Equation~\ref{eq:contrastive_scogo} suggests a contrastive procedure, maximizing the score at the goal-transition distribution and minimizing the score at the offline data distribution under the current policy with Bellman regularization. The  Bellman regularization has the interpretation of discouraging neighboring $S$ values from deviating far and smoothing the score landscape. Instantiating with KL divergence results in an objective with similar intuition while resembling an InfoNCE~\citep{oord2018representation} objective. Although Propositions~\ref{lemma:gcrl-imitation} and~\ref{lemma:gcrl-imitation-mixture} suggest that KL divergence gives an objective that is a tighter bound to the GCRL objective, prior work has found KL divergence to be unstable in practice~\citep{sikchi2023dual,garg2023extreme} for dual optimization. 

It is important to note that $S$-function is not grounded to any rewards and does not serve as a probability density of reaching goals, but is rather a score function learned via \textit{a Bellman-regularized contrastive learning procedure}.


We now derive a practical approach for \scogo in the offline GCRL setting. We use parameterized functions: $S_\phi(s,a,g)$,  $M_\psi(s,g)$,  $\pi_\theta(a|s,g)$.  The offline learning regime necessitates measures to constrain the learning policy to the offline data support in order to prevent overestimation due to maximizing $\pi_g$ in Eq.~\ref{eq:contrastive_scogo} over potentially out-of-distribution actions. Inspired by prior work~\citep{kostrikov2021offline}, we use implicit maximization to constrain the learning algorithm to learn expectiles using the observed empirical samples. More concretely, we use expectile regression:
\begin{equation}
    \label{eq:practical_m_update}
    \min_\psi \mathcal{L}(\psi) \coloneqq \E{(s,a, g) \sim \rho}{L^\tau_2(M_\psi(s,g)-S_\phi(s,a,g))},
\end{equation}
where $L^\tau_2(u)=|\tau-1(u<0)|u^2$. Intuitively, this step implements the maximization w.r.t $\pi$ by using expectile regression. With the above practical considerations, our objective for learning $S_\phi$ reduces to:

\begin{equation}
\label{eq:practical_s_update}
\scalebox{0.92}{$
\begin{aligned}
&\min_\phi \mathcal{L}(\phi) \coloneqq \beta (1-\gamma)\E{(s, g) \sim \mathcal{D},a \sim \pi_g(\cdot \mid a, g)}{S_\phi(s,\pi_g(s),g)}  +\beta \gamma \E{(s,a,g) \sim q,s' \sim p(\cdot|s,a)} { S_\phi(s',\pi_g(s'),g) } \\
&\quad -\beta \E{(s,a,g) \sim q}{S_\phi(s,a,g)}+\E{(s,a,g) \sim \Mix_\beta(q, \rho)}{(\gamma M_\psi(s',g )-S_\phi(s,a,g))^2},
\end{aligned}
$}
\end{equation}


where we have set the offline data distribution as our initial state distribution.  Finally, the policy is extracted via advantage-weighted regression that learns in-distribution actions maximizing the score $S(s,a,g)$:
\begin{equation}
\label{eq:practical_pi_update}
    \min_\theta \mathcal{L}(\theta) \coloneqq \E{(s,a, g) \sim \rho}{\exp(\alpha (S_\phi(s,a,g)-M_\psi(s,g))) \log(\pi_\theta(a|s,g))},
\end{equation}
where $\alpha$ is the temperature parameter. Algorithm~\ref{alg:ALG1} details the practical implementation.

\section{Experiments}
\label{sec:result}

Our experiments study the effectiveness of proposed GCRL algorithm \scogo on a set of simulated benchmarks against other GCRL methods that employ behavior cloning, RL with sparse reward, and contrastive learning. We also analyze if \scogo is robust to environment stochasticity --- a number of prior methods are based on an assumption of deterministic dynamics.  Then, we study if the discriminator-free nature of \scogo is indeed able to prevent performance degradation in the face of low expert coverage in offline data. Finally, we analyze if \texttt{SMORe}'s score-modeling approach helps \scogo scale to a vision-based manipulation offline GCRL benchmark, as density modeling and discriminator learning become increasingly difficult with high-dimensional observations. Hyperparameter ablations can be found in Appendix~\ref{ap:additional_experiments}.

\subsection{Experimental Setup}

Our experiments will use a suite of simulated goal-conditioned tasks extending the tasks from previous work~\citep{ma2022far,plappert2018multi}. In particular we consider the following environments: \texttt{Reacher}, Robotic arm environments - [\texttt{SawyerReach}, \texttt{SawyerDoor}, \texttt{FetchReach}, \texttt{FetchPick}, \texttt{FetchPush}, \texttt{FetchSlide}], Anthropomorphic hand environment - \texttt{HandReach} and Locomotion environments -[\texttt{CheetahTgtVel-me,CheetahTgtVel-re,AntTgtVel-me,AntTgtVel-re}]. Tasks in all environments are specified by a sparse reward function. Depending on whether the task involves object manipulation, the goal distribution is defined over valid configurations in robot or object space. The offline dataset for manipulation tasks consists of transitions collected by a random policy or mixture of 90\% random policy and 10\% expert policy. For locomotion tasks, we generate our dataset using the D4RL benchmark~\citep{fu2020d4rl}, combining a random or medium dataset with 30 episodes of expert data. Note that the policies used to collect the expert locomotion datasets have a different objective than the tasks here, which are to achieve and maintain a particular desired velocity.


\subsection{Offline Goal-conditioned RL benchmark}
\para{Baselines.} 
We compare to state-of-art offline GCRL algorithms, consisting of both regression-based and actor-critic methods. The occupancy-matching based methods are: 
(1) \textbf{GoFar}~\citep{ma2022far}, which derives a dual objective for GCRL based on a coverage assumption. The behavior cloning based methods are:
(1) \textbf{GCSL}~\citep{ghosh2019learning}, which incorporates hindsight relabeling in conjunction with behavior cloning to clone actions that lead to a specified goal, and (2) \textbf{WGCSL}~\citep{yang2022rethinking}, which improves upon GCSL by incorporating discount factor and advantage weighting into the supervised policy learning update. \textbf{Contrastive RL}~\citep{eysenbach2022contrastive} generalizes C-learning~\citep{eysenbach2020c} and represents contrastive GCRL approaches. The RL with sparse reward methods are (1) \textbf{IQL}~\citep{kostrikov2021offline} where we use a state-of-the-art offline RL method repurposed for GCRL along with HER~\citep{andrychowicz2017hindsight} goal sampling, and (2) \textbf{ActionableModel (AM)}~\citep{chebotar2021actionable}, which incorporates conservative Q-Learning~\citep{kumar2020conservative} as well as goal-chaining on top of an actor-critic method. 

The results for all baselines are tuned individually, particularly the best HER ratio was searched among $\{0.2, 0.5, 0.8, 1.0\}$ for each task. \scogo shares the same network architecture for baselines and uses a mixture ratio of $\beta=0.5$. Each method is trained for 10 seeds. Complete architecture and hyperparameter table as well as additional training details are provided in Appendix~\ref{ap:experimental_details}.

\begin{wrapfigure}{r}{0.45\textwidth}
\vspace{-3pt}
    \hspace*{-.75\columnsep}    
\begin{center}
          \includegraphics[width=1.00\linewidth]{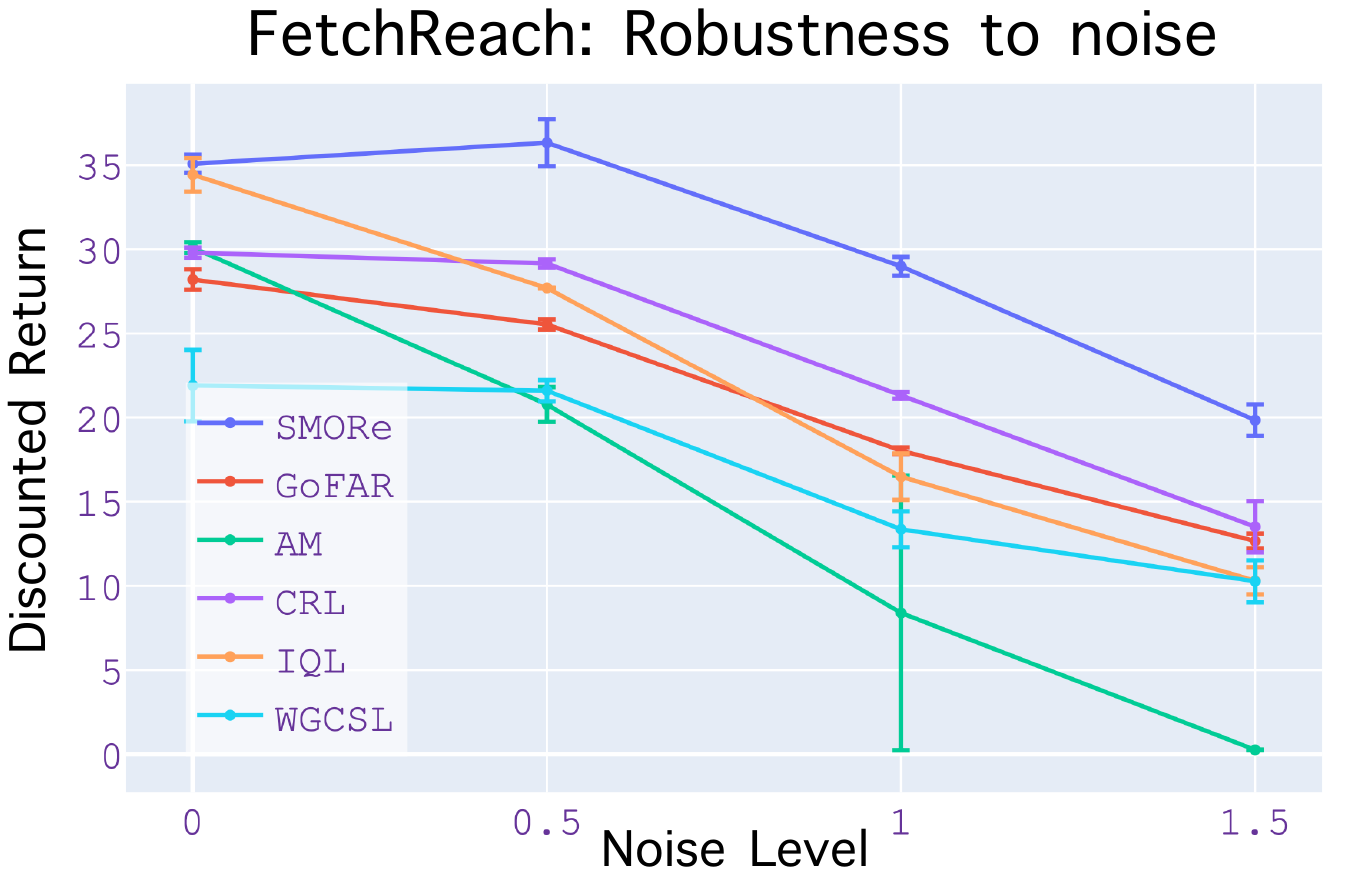}
\end{center}
\vspace{-3.0mm}
\caption{\reb{\scogo is robust in stochastic environments. With increasing noise, \scogo still outperforms prior methods.}}
\vspace{-5pt}
\label{fig:offline_gcrl_with_noise}
\end{wrapfigure}

Table~\ref{table:offline-gcrl-full-discounted-return} reports the \textbf{discounted return} obtained by the learned policy with a sparse binary task reward. ($\star$) denotes statistically significant improvement over the second best method under a \reb{Mann-Whitney U test with a significance level of 0.05}. This metric allows us to compare the algorithms on a finer scale to understand which methods reach the goal as fast as possible and stay in the goal region thereafter for the longest time. Additional results on metrics like success rate and final distance to goal can be found in the appendix. These additional metrics do not take into consideration how \textit{precisely} and \textit{consistently} a goal is being reached. In Table~\ref{table:offline-gcrl-full-discounted-return}, we see that \scogo enjoys a high-performance gain consistently across all tasks in the extended offline GCRL benchmark.

\paragraph{Robustness to environment stochasticity: } We consider a noisy version of the FetchReach environment in this experiment. Gaussian zero-mean noise is added \reb{before executing an action} to generate different variants of the environment with standard deviations of $\set{0.5,1.0,1.5}$. Datasets for these environments are obtained from prior work~\citep{ma2022far}. As we see in Figure~\ref{fig:offline_gcrl_with_noise}, \scogo is robust to stochasticity in the environment, outperforming baselines in terms of discounted return. Behavior cloning based approaches assume deterministic dynamics and are therefore over-optimistic in stochastic environments.

\begin{table}
\vspace{-10pt}
\resizebox{\textwidth}{!}{
\begin{tabular}{l|rr|rr|r|rr}
\toprule
\multicolumn{1}{c|}{\textbf{Task}}
&\multicolumn{2}{c|}{\textbf{Occupancy Matching}}& \multicolumn{2}{c|}{\textbf{Behavior cloning}}& \textbf{Contrastive RL}
& \multicolumn{2}{c}{\textbf{RL+sparse reward}} \\ 
 & \textbf{SMORe} & \textbf{GoFAR}  & \textbf{WGCSL} & \textbf{GCSL} &\textbf{CRL} & \textbf{AM} & \textbf{IQL} \\ 
\midrule 
Reacher ($\star$) & \textbf{28.40}$\pm${\scriptsize 0.88}& 19.74$\pm${\scriptsize 1.35} & 17.57$\pm${\scriptsize 0.53}&  15.87$\pm${\scriptsize 1.31}& 16.44$\pm${\scriptsize0.60 }&23.26 $\pm${\scriptsize 0.14} & 11.70 $\pm${\scriptsize 1.97}\\ 
SawyerReach ($\star$) &\textbf{37.67}$\pm${\scriptsize 0.12} & 15.34$\pm${\scriptsize 0.64} & 15.15$\pm${\scriptsize 0.44} &14.25$\pm${\scriptsize 0.7} &22.32 $\pm${\scriptsize 0.34} & 23.34$\pm${\scriptsize 0.17} &   35.18 $\pm${\scriptsize 0.29}\\ 
SawyerDoor ($\star$) &\textbf{31.48}$\pm${\scriptsize 0.46}& 18.94$\pm${\scriptsize 0.01} & 20.01$\pm${\scriptsize 1.55} & 20.88$\pm${\scriptsize 0.22}& 12.96$\pm${\scriptsize 5.19} &22.12 $\pm${\scriptsize 0.13} &  25.52 $\pm${\scriptsize 1.45}\\
FetchReach ($\star$) & \textbf{35.08}$\pm$ {\scriptsize 0.54} & 28.2 $\pm$ {\scriptsize0.61} & 21.9$\pm$ {\scriptsize2.13}  & 20.91 $\pm$ {\scriptsize2.78} & 30.07$\pm${\scriptsize 0.07}  & 30.1 $\pm$ {\scriptsize0.32}  & 34.43 $\pm$ {\scriptsize1.00}  \\ 
FetchPick ($\star$)&\textbf{26.47} $\pm$ {\scriptsize1.34}& 19.7 $\pm$ {\scriptsize2.57} & 9.84 $\pm$ {\scriptsize2.58} &  7.58$\pm${\scriptsize 1.85}&  0.42$\pm${\scriptsize 0.29} & 8.94 $\pm$ {\scriptsize3.09}  & 16.8 $\pm$ {\scriptsize3.10}  \\
FetchPush ($\star$) &\textbf{26.83}$\pm$ {\scriptsize1.21}& 18.2 $\pm$ {\scriptsize3.00} &14.7 $\pm$ {\scriptsize2.65}  & 13.4 $\pm$ {\scriptsize3.02} &2.40 $\pm${\scriptsize1.28 }  & 14.0 $\pm$ {\scriptsize2.81}  &  22.40 $\pm$ {\scriptsize0.74} \\
FetchSlide &\textbf{4.99}$\pm$ {\scriptsize0.40}& 2.47 $\pm$ {\scriptsize1.44} & 2.73 $\pm$ {\scriptsize1.64}  & 1.75 $\pm$ {\scriptsize1.3} & 0.0$\pm${\scriptsize0.0 }& 1.46 $\pm$ {\scriptsize1.38}  & \textbf{4.80} $\pm$ {\scriptsize1.59}  \\ 
HandReach ($\star$)&\textbf{18.68} $\pm$ {\scriptsize3.35}& 11.5 $\pm$ {\scriptsize5.26}&5.97 $\pm$ {\scriptsize4.81} &1.37 $\pm$ {\scriptsize2.21} & 0.0$\pm${\scriptsize 0.0}& 0.0 $\pm$ {\scriptsize0.0}  & 1.44  $\pm$ {\scriptsize1.77}\\ 
\midrule
CheetahTgtVel-m-e ($\star$)&\textbf{136.71} $\pm$ {\scriptsize10.59} &0.0$\pm$ {\scriptsize0.0} &0.0$\pm$ {\scriptsize0.0} &95.98$\pm$ {\scriptsize15.72} & 0.0$\pm${\scriptsize 0.0}& 0.0$\pm$ {\scriptsize0.0} & 100.38$\pm$ {\scriptsize1.22} \\ 
CheetahTgtVel-r-e ($\star$)&\textbf{60.01} $\pm$ {\scriptsize39.40}& 0.0$\pm$ {\scriptsize0.0} & 0.0$\pm$ {\scriptsize0.0}&11.56 $\pm$ {\scriptsize13.47}& 0.0$\pm${\scriptsize 0.0} & 0.0$\pm$ {\scriptsize0.0} &    0.0$\pm$ {\scriptsize0.0} \\ 
AntTgtVel-m-e &154.95$\pm$ {\scriptsize19.44}& \textbf{168.27}$\pm$ {\scriptsize9.58} &0.0$\pm$ {\scriptsize0.0} &164.54$\pm$ {\scriptsize7.69}& 0.0$\pm${\scriptsize 0.0} & 0.0$\pm$ {\scriptsize0.0} & 148.17 $\pm$ {\scriptsize5.43} \\ 
AntTgtVel-r-e ($\star$)&\textbf{126.22}$\pm$ {\scriptsize14.40}& 74.36$\pm$ {\scriptsize15.97} &0.0$\pm$ {\scriptsize0.0} &104.95$\pm$ {\scriptsize6.00} & 0.0$\pm${\scriptsize 0.0}& 0.0$\pm$ {\scriptsize0.0} &  3.06 $\pm$ {\scriptsize 2.64}  \\ 
\midrule
\bottomrule
\end{tabular}
}
\caption{Discounted Return for the offline GCRL benchmark. Results are averaged over 10 seeds.`m-e' and `r-e' stands for medium-expert mixture and random-expert mixture respectively.  ($\star$) denotes statistically significant improvements.}
\label{table:offline-gcrl-full-discounted-return}
\vspace{-0.2cm}
\end{table}

\subsection{Robustness of Occupancy-Matching Methods to Decreasing Expert Coverage}
\label{sec:scogo_robustness}

We posit that the discriminator-free nature of \scogo makes it more robust to decreasing goal coverage, as it does not suffer from cascading errors stemming from a learned discriminator. In this section, we set out to test this hypothesis by decreasing the amount of expert data in the offline goal-reaching dataset. We compare with GoFAR in Table~\ref{table:offline-gcrl-coverage} due to the similarity between methods and GoFAR's restrictive assumption on coverage of expert data in the suboptimal dataset. Comparison against all the baselines can be found in Appendix~\ref{ap:additional_experiments}. 

Our hypothesis holds true as we see in Table~\ref{table:offline-gcrl-coverage}, the performance of the discriminator-based method GoFar rapidly decays as expert data is decreased in the offline dataset -- 28.4\% with 2.5\%  and 36.15\% with 1\% expert data(i.e. optimal policy's coverage) respectively. \scogo shows a much slower decay in performance, 7.6\% with 2.5\%  and 16\% with 1\% expert data, attesting to the method's robustness under decreasing expert coverage in the offline dataset.

\begin{table}[t]
\vspace{-10pt}
\centering
\resizebox{0.8\textwidth}{!}{
\begin{tabular}{l|cc|cc|cc}
\toprule
\multicolumn{1}{c|}{\textbf{Task}}
& \multicolumn{2}{c|}{\textbf{5 \% expert data}}
& \multicolumn{2}{c|}{\textbf{2.5 \% expert data}} & \multicolumn{2}{c}{\textbf{1 \% expert data}} \\ 
 & \textbf{SMORe} & \textbf{GoFAR}  & \textbf{SMORe} & \textbf{GoFAR}  & \textbf{SMORe} & \textbf{GoFAR}  \\ 
\midrule  
Reacher  &22.43$\pm${\scriptsize3.46} &16.86 $\pm${\scriptsize1.26} &17.92 $\pm$ {\scriptsize0.93}& 12.20$\pm${\scriptsize0.81} &19.61$\pm$ {\scriptsize1.56}  & 11.52 $\pm$ {\scriptsize0.52}\\ 
SawyerReach  & 36.35$\pm${\scriptsize0.37} &13.20 $\pm${\scriptsize1.36} &36.74$\pm${\scriptsize0.62} & 11.57 $\pm${\scriptsize1.79} & 35.44 {\scriptsize0.27}& 9.34$\pm$ {\scriptsize0.17}\\ 
SawyerDoor & 32.82$\pm${\scriptsize0.88}& 20.07$\pm${\scriptsize0.01}&25.69$\pm${\scriptsize0.21} & 19.54$\pm${\scriptsize1.32} & 23.78$\pm${\scriptsize2.88}  & 18.04 $\pm${\scriptsize1.80}\\ 
FetchReach  & 36.00$\pm$ {\scriptsize0.01}& 27.66 $\pm$ {\scriptsize0.55} & 35.58 $\pm$ {\scriptsize0.47}& 27.84 $\pm$ {\scriptsize0.82}  & 35.97 $\pm$ {\scriptsize0.25} & 28.01 $\pm$ {\scriptsize0.20} \\ 
FetchPick &26.43$\pm$ {\scriptsize1.95} & 16.21 $\pm$ {\scriptsize1.46} & 26.17$\pm$ {\scriptsize3.37}& 3.21 $\pm$ {\scriptsize2.22}  & 15.38 $\pm$ {\scriptsize1.52} &  0.31 $\pm$ {\scriptsize 0.31}  \\
FetchPush &23.81$\pm$ {\scriptsize 0.37} & {18.2} $\pm$ {\scriptsize3.00} &22.75$\pm${\scriptsize1.08} & 5.17 $\pm$ {\scriptsize2.01} & 19.04$\pm$ {\scriptsize2.79} & 4.23$\pm$ {\scriptsize3.96}\\
FetchSlide &4.05$\pm$ {\scriptsize1.12} & 1.08 $\pm$ {\scriptsize0.06} & 3.11 $\pm$ {\scriptsize1.61} & 0.96 $\pm$ {\scriptsize0.73} & 3.50$\pm$ {\scriptsize0.97} & 0.86  $\pm$ {\scriptsize1.22}\\ 
\midrule
Average Performance & 25.98& 16.18  & 23.99 & 11.49 &21.81  &  10.33\\
Avg. Perf. Drop & 0 & 0  & -7.6\% & -28.4\% & -16\%  &  -36.15\%\\
\bottomrule
\end{tabular}
}
\vspace{-3pt}
\caption{Discounted Return for the offline GCRL benchmark with 5\%, 2.5\% and 1\% expert data in offline dataset. Results are averaged over 10 seeds.}
\label{table:offline-gcrl-coverage}
\vspace{-10pt}
\end{table}


\subsection{Offline GCRL with image observations}
\begin{figure}[b]
\vspace{-5pt}
\begin{center}
          \includegraphics[width=1.00\linewidth]{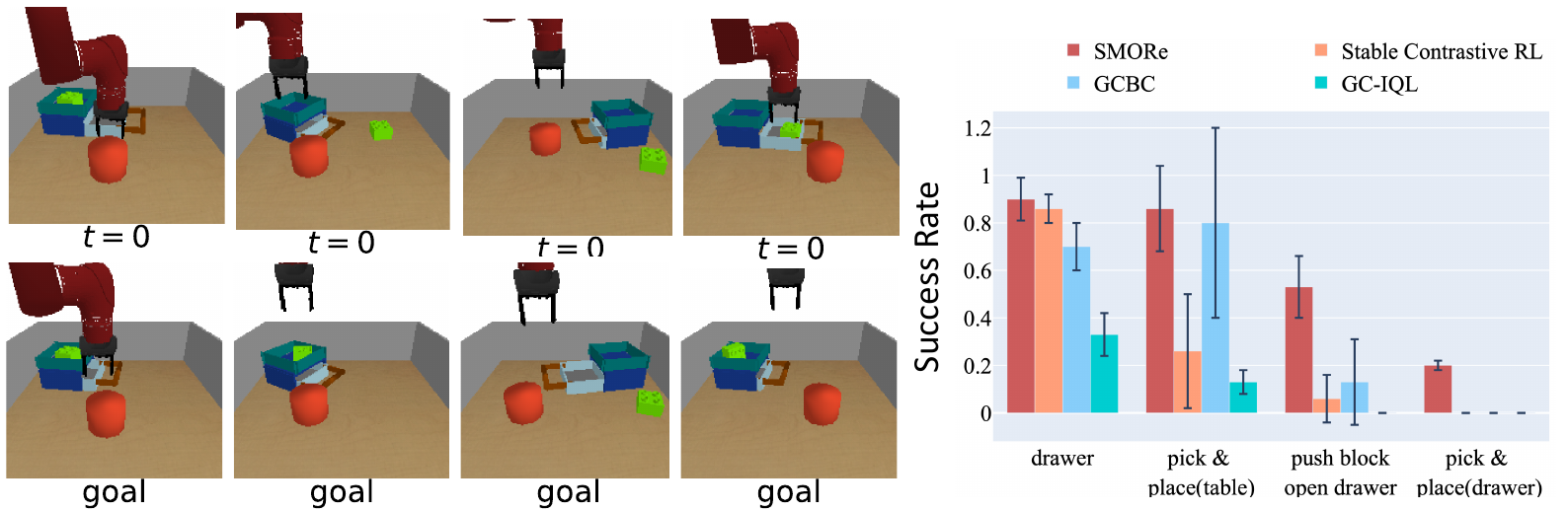}
\end{center}
\vspace{-1.0mm}
\caption{ Evaluation on simulated manipulation tasks with image observations. The left image shows the starting state at the top and the goal at the bottom for evaluation tasks. The error bars show the standard deviation with 5 random seeds. SMORe is competitive or outperforms prior methods on all the tasks we considered.}
\label{fig:offline_gcrl_with_images}
\end{figure}

\scogo provides an effective algorithm for offline GCRL in high-dimensional observation spaces by learning unnormalized scores using a contrastive procedure as opposed to prior works that learn normalized densities~\citep{eysenbach2020c} which are difficult to learn or density ratios~\citep{eysenbach2022contrastive,zheng2023stabilizing} which do not optimize for the optimal goal-conditioned policy in the offline GCRL setting. Similar to prior work~\citep{eysenbach2022contrastive}, we consider the following structure in S-function parameterization to learn performant and generalizable policies: $S(s,a,g) = \phi(s,a)^T\psi(g)$. The $S$-function can be interpreted as the similarity between the two representations given by $\phi$ and $\psi$. Our network architecture for both representations is similar to~\cite{zheng2023stabilizing} and is kept the same across all baselines to ensure a fair comparison of the underlying GCRL method.

We use the offline GCRL benchmark from~\citep{zheng2021stackelberg} which learns goal-reaching policies from an image-observation dataset of 250K transitions with the horizon ranging from 50-100. The benchmark adds another layer of complexity by testing on goals absent from the dataset --- the dataset contains primitive behaviors like picking up objects and pushing drawers but no behavior that completes the compound task we consider from the initial state. The observations and goals are 48x48x3 RGB images. 

\para{Baselines} We compare to the best performing GCRL algorithms from Section~\ref{alg:ALG1} as well as a recent state-of-the-art work, stable contrastive RL~\cite{zheng2023stabilizing}. Stable contrastive RL features a number of improvements over contrastive RL by changing design decisions in neural network architecture, layer normalization, and data augmentation. Since our objective is to compare the quality of the underlying GCRL algorithm, we keep these design decisions consistent across the board.

\para{Results} Figure~\ref{fig:offline_gcrl_with_images} shows the success rate on a variety of unseen tasks for all the methods. \scogo achieves highest success rates across all the tasks, even for the most challenging task of pick, place and closing the drawer. We note that our results differ from~\cite{zheng2023stabilizing} for the baselines as we apply the same design decisions for all methods whereas~\cite{zheng2023stabilizing} focuses on ablating design decisions. 

\section{Related Works}

\textbf{Offline Goal Conditioned Reinforcement Learning.}  Learning to achieve goals in the environment optimally forms the basis of goal-condition RL problems. Studies in cognitive science~\citep{molinaro2023goal} underscore the importance goal-achieving plays in human development. Offline GCRL approaches are typically catered to designing learning algorithms for addressing the sparsity of reward function in the offline setting.  One of the most successful techniques in this setting has been hindsight relabelling. Hindsight-experience relabelling (HER)~\citep{kaelbling1993learning,andrychowicz2017hindsight} suggests relabelling any experience with some commanded goal to the goal that was actually achieved in order to leverage generalization. HER has been investigated in the setting of learning from demonstrations~\citep{ding2019goal} and exploration~\citep{fang2019curriculum} to validate its effectiveness. A number of prior works~\citep{ghosh2019learning,yang2019imitation,chen2020learning,ding2019goal,lynch2020learning,paster2020planning,srivastava2019training,hejna2023distance} have investigated using goal-conditioned behavior cloning, a strategy that uses relabelling to learn goal-conditioned policies, as a way to learn performant policies. \citet{eysenbach2022imitating} shows that this line of work has a limitation of learning suboptimal policies that do not consistently improve over the policy that collected the dataset. The simplest strategy of applying single-task RL to the problem of multi-task goal reaching requires learning a $Q$-function which represents normalized densities over the state-action space. Contrastive RL~\citep{eysenbach2022contrastive,eysenbach2020c,zheng2023stabilizing} emerged as another alternative for GCRL which relabels trajectories and, rather than use that relabelling to learn policies, learns a $Q$-function using a contrastive procedure. While these approaches learn optimal policies in the online setting, they fall behind in the offline setting where they only learn a policy that greedily improves over the $Q$-function of the data collecting policy. Our work learns optimal policies by presenting an off-policy objective that solves GCRL and furthermore learns scores (or unnormalized densities) that alleviate the learning challenges of normalized density estimation.  

\textbf{Distribution matching.} Our approach is inspired by the distribution matching approach~\citep{ghasemipour2020divergence,ni2021f,sikchi2022ranking,swamy2021moments,sikchi2023dual} prominent in imitation learning. \citet{ghasemipour2020divergence,ni2021f} takes the problem of imitating an expert demonstrator in the environment and converts it into a problem of distribution matching between the current policy's state-action visitation distribution and the expert policy's visitation distribution. Indeed, a prior work $f$-PG~\citep{agarwal2024f} proposes a distribution matching approach to GCRL but is restricted to the on-policy setting. Another, prior work~\citep{ma2022far} creates one such distribution matching problem and presents a new optimization problem for GCRL in the form of an off-policy dual~\citep{nachum2020reinforcement,sikchi2023dual}. Such an off-policy dual is very appealing for the offline RL setup, as optimizing for this dual only requires sampling from the offline data distribution. A limitation of their dual construction is the fact that they require learning a discriminator and use that discriminator as the pseudo-reward for solving the GCRL objective. Our approach presents a new construction for GCRL as a distribution matching problem along with a dual construction that leads to a more performant discriminator-free off-policy approach for GCRL.

\section{Conclusion}

Prior work in performant online goal-conditioned RL often relies on iterated behavior cloning or contrastive RL. However, these approaches are suboptimal for the offline setting. Existing methods specifically derived for offline GCRL require learning a discriminator and using it as a pseudo-reward, enabling compounding errors that make the resulting policy ineffective. We present an occupancy-matching approach to offline GCRL that provably optimizes a lower bound to the regularized GCRL objective. Our method is discriminator-free, applicable to a number of $f$-divergences, and learns unnormalized scores over actions at a state to reach the goal. We show that these positive aspects of our algorithm allow us to empirically outperform prior methods, stay robust under decreasing goal coverage, and scale to high-dimensional observation space for GCRL.

\subsection*{Acknowledgements}
We thank Siddhant Agarwal, MIDI lab members, and ICLR reviewers for valuable feedback on this work. This work has taken place in the Safe, Correct, and Aligned Learning and Robotics Lab (SCALAR) at The University of Massachusetts Amherst and Machine Intelligence through Decision-making and Interaction (MIDI) Lab at The University of Texas at Austin. SCALAR research is supported in part by the NSF (IIS-2323384), AFOSR (FA9550-20-1-0077), and ARO (78372-CS, W911NF-19-2-0333), and the Center for AI Safety (CAIS).  This research was also sponsored by the Army Research Office under Cooperative Agreement Number W911NF-19-2-0333. HS and AZ are funded in part by a sponsored research agreement with Cisco Systems Inc. The views and conclusions contained in this document are those of the authors and should not be interpreted as representing the official policies, either expressed or implied, of the Army Research Office or the U.S. Government. The U.S. Government is authorized to reproduce and distribute reprints for Government purposes notwithstanding any copyright notation herein.

\bibliography{iclr2024_conference}
\bibliographystyle{abbrvnat}

\newpage
\appendix
\section{Appendix}


\subsection{Theory}

In this section, we first show the equivalence of the GCRL problem and the distribution-matching objective of imitation learning. Then, we show how the mixture distribution objective relates to offline GCRL objective. Finally, we derive the dual objective for mixture distribution matching that leads to our method \scogo. 

\subsubsection{Reduction of GCRL to distribution matching}

\gcrlimitationf*

\begin{proof}
This proof is adapted from~\cite{ma2022far} for goal transition distributions and state-action distributions. Let $Z=\int e^{R(s,a,g)}~ds~da~dg$ and $\alpha>0$ be the temperatue parameter. Note that $q(s,a,g)=e^{r(s,a,g)}$ where r is defined in the proposition, strictly generalizes the original definition $q(s,a,g) = q^{\texttt{train}}(g)\E{s'\sim p(\cdot|s,a)}{\mathbb{I}(\phi(s')=g)}$ and recovers it when $\alpha\to \infty$. Starting with the true GCRL objective:
\begin{align}
    \alpha J(\pi_g) &= \E{d^{\pi_g}}{\alpha R(s,a,g)}\\
    &= \E{d^{\pi_g}}{\log e^{\alpha R(s,a,g)}}\\
    &= \E{d^{\pi_g}}{\log( \frac{e^{\alpha R(s,a,g)}}{Z}\frac{d^{\pi_g}(s,a,g)}{d^{\pi_g}(s,a,g)}Z)}\\
    &= \E{d^{\pi_g}}{\log( \frac{q(s,a,g)}{d^{\pi_g}(s,a,g)}Z)} +\E{d^{\pi_g}}{\log d^{\pi_g}} \\
    &= -D_{KL}(d^{\pi_g}(s,a,g)\|q(s,a,g)) - \mathcal{H}(d^{\pi_g})+\log(Z)
\end{align}
Rearranging terms we get:
\begin{equation}
    J(\pi_g) + \frac{1}{\alpha} \mathcal{H}(d^{\pi_g}) =- \frac{1}{\alpha} D_{KL}(d^{\pi_g}(s,a,g)\| q(s,a,g)) + C
\end{equation}

For any $f$-divergence that upper bounds 
 the KL divergence we have:
 \begin{equation}
    J(\pi_g) + \frac{1}{\alpha} \mathcal{H}(d^{\pi_g}) = - \frac{1}{\alpha} D_{KL}(d^{\pi_g}(s,a,g)\| q(s,a,g)) + C \ge - \frac{1}{\alpha} D_{f}(d^{\pi_g}(s,a,g)\| q(s,a,g)) +C
\end{equation}
\end{proof}

\textbf{A (dataset) regularized GCRL objective: } Define a regularized objective for GCRL as follows:
\begin{equation}    
J_{offline}(\pi) = \alpha_1 \E{d^\pi}{ e^{r(s,a,g)}} + \alpha_2 \E{d^\pi(s,a,g)}{\rho(s,a,g)}.
\end{equation}

\reb{The second term in the objective $\E{d^\pi(s,a,g)}{\rho(s,a,g)}$ above is maximized when the policy visitation places more probability mass on the most visited transitions in the dataset. To see why this is, consider two probability distributions represented as vectors $d^\pi$ and $\rho$ with individuals elements of the vector indexed by $i$:
\begin{equation}
    \langle d^\pi,\rho \rangle \le \max_i \rho_i
\end{equation}
The equality holds only when $d^\pi$ places probability mass on all state-action-goal tuples which are most visited in the offline dataset $\rho$. The first term maximizes the true GCRL objective while the second term prefers staying close to transitions that are most frequently observed in the offline dataset. A constraint of $\langle d^\pi,\rho \rangle \ge 1-\delta$ implies that the agent visitation places atleast half of the probability mass on state-action-goal tuples whose average visitation in offline dataset is greater than or equal to $1-\delta$. With weights $\alpha_1$ and $\alpha_2$, the objective above reflects an lagrangian relaxation to this constraint. Thus the above offline objective presents an alternative offline objective when compared to the classical offline RL objectives~\cite {wu2019behavior,nachum2020reinforcement}.} 

Proposition~\ref{lemma:gcrl-imitation-mixture}  derives the connection between the dataset regularized GCRL objective and \scogo:
\begin{restatable}[]{proposition}{gcrlimitationfmixture}
\label{lemma:gcrl-imitation-mixture}
Consider a stochastic MDP, a stochastic policy $\pi$, and a sparse reward function $r(s,a,g)=\E{s'\sim p(\cdot|s,a)}{\mathbb{I}(\phi(s')=g,q^{train}(g)>0)}$ where $\mathbb{I}$ is an indicator function, define a soft goal transition distribution to be $q(s,a,g)~\propto ~\text{exp}(\alpha ~r(s,a,g))$ the following bounds hold for any $f$-divergence that upper bounds KL-divergence (eg. $\chi^2$, Jensen-Shannon):
\begin{equation}
      \log J_{offline}(\pi_g) +\mathcal{H}(\dmix) +C \ge - \D_f(\dmix \| \demix),
\end{equation}
where $\mathcal{H}$ denotes the entropy, $\alpha$ is a temperature parameter, $\alpha_1=\beta^2$, $\alpha_2=\beta(1-\beta)Z$ and C is a positive constant. Furthermore, the bound is tight when $f$ is the KL-divergence.
\end{restatable}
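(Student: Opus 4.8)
The plan is to mirror the proof of Proposition~\ref{lemma:gcrl-imitation} but carried out at the level of the mixture distributions. The key observation to establish first is an algebraic identity relating the mixture of the goal-transition distribution with the data distribution to an exponentiated reward. Recall $q(s,a,g) \propto e^{\alpha r(s,a,g)}$, so that $q(s,a,g) = e^{\alpha r(s,a,g)}/Z$ with $Z$ the partition function. I would begin by writing out $\demix = \beta q + (1-\beta)\rho$ and try to express it, up to the normalizing constant, in a form that isolates a quantity proportional to $\beta^2 e^{\alpha r} + \beta(1-\beta) Z \rho$, which is exactly $Z \cdot J_{\text{offline}}$ integrand-wise once we substitute $\alpha_1 = \beta^2$ and $\alpha_2 = \beta(1-\beta)Z$. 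This is where the specific choice of weights in the statement comes from, so pinning down this correspondence is the crux of the bookkeeping.

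Next I would run the same KL-identity computation as in Proposition~\ref{lemma:gcrl-imitation}, but with $d^{\pi_g}$ replaced by $\dmix$ and $q$ replaced by $\demix$. Concretely, I would write
\begin{align}
\label{eq:plan-kl}
 -\D_{\mathrm{KL}}(\dmix \| \demix) &= \E{\dmix}{\log \frac{\demix}{\dmix}} \nonumber \\
 &= \E{\dmix}{\log \demix} + \mathcal{H}(\dmix).
\end{align}
The plan is then to lower-bound or rewrite $\E{\dmix}{\log \demix}$ using the algebraic identity from the first step, pulling out the constant $\log Z$ (and any other normalization) into the constant $C$, and recognizing the surviving expectation as $\log J_{\text{offline}}(\pi_g)$ up to that constant. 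The appearance of $\log J_{\text{offline}}$ rather than $J_{\text{offline}}$ itself (contrast with Proposition~\ref{lemma:gcrl-imitation}) is because the mixture sits \emph{inside} the logarithm, so the $\log$ does not distribute linearly; I expect to use concavity of $\log$ (Jensen) or the explicit mixture structure to extract the $\log J_{\text{offline}}$ term cleanly, and this is the step most likely to require care.

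Having established the KL version as an equality (tightness claim), I would finish by invoking the hypothesis that $f$ upper-bounds the KL divergence: since $\D_f(\dmix \| \demix) \ge \D_{\mathrm{KL}}(\dmix \| \demix)$, we get $-\D_f \le -\D_{\mathrm{KL}}$, and combining with the rearranged identity yields
\begin{equation}
\label{eq:plan-final}
 \log J_{\text{offline}}(\pi_g) + \mathcal{H}(\dmix) + C \ge -\D_f(\dmix \| \demix),
\end{equation}
with equality precisely when $f = \D_{\mathrm{KL}}$. The main obstacle, as flagged above, is the interaction between the logarithm and the mixture: getting $\E{\dmix}{\log \demix}$ to collapse exactly into $\log J_{\text{offline}}(\pi_g)$ plus a constant, rather than a messier bound, requires carefully matching the mixture weights $\alpha_1 = \beta^2$ and $\alpha_2 = \beta(1-\beta)Z$ to the definition of $J_{\text{offline}}$ and correctly absorbing all partition-function and normalization terms into $C$. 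I would double-check the signs and the constant's positivity at the end, since the statement asserts $C$ is a positive constant.
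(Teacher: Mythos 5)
Your proposal matches the paper's proof essentially step for step: the same bookkeeping identifying $\beta Z\,(\beta q+(1-\beta)\rho)$ with the integrand of $J_{offline}$ (whence $\alpha_1=\beta^2$, $\alpha_2=\beta(1-\beta)Z$ and the policy-independent cross term becomes a constant), the same decomposition $-\kl{\dmix}{\demix}=\E{\dmix}{\log \demix}+\mathcal{H}(\dmix)$, the same Jensen step to move the logarithm through the mixture expectation, and the same final appeal to $\D_f \ge D_{\mathrm{KL}}$ with $\log Z\ge 0$ absorbed into $C$. One small correction: because Jensen's inequality genuinely intervenes here, the KL case remains an inequality rather than an equality (the paper's own proof likewise only obtains $\ge$, unlike Proposition~\ref{lemma:gcrl-imitation}), so your parenthetical claim to have ``established the KL version as an equality'' overstates what the argument---yours or the paper's---delivers.
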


\begin{proof}

We first consider the following two objectives for GCRL and show that they are equivalent. This reduction will later help in proving a connection to mixture occupancy matching. We consider $\alpha=1$ w.l.o.g.
Here are two objectives we consider:
\begin{equation}
    J(\pi) = \E{d^\pi}{r(s,a,g)}
\end{equation}
\begin{equation}
    J'(\pi) = \E{d^\pi}{e^{r(s,a,g)}}
\end{equation}

In GCRL reward functions are sparse and binary. We show the equivalence of first two objectives in find the optimal goal conditioned policy via two arguments. First, notice that the rewards for goal transition states for objective $J'(\pi)$ is $e$ and 1 for all other transitions. This is in contrast to $J(\pi)$ which considers a reward function $1$ at goal transitions states and 0 otherwise. Under our assumption of infinite horizon discounted MDP, we can translate the rewards while keeping the optimal policy same in MDP considered by $J'(\pi)$ to $e-1$ at goal transitions states and $0$ otherwise. Further we can scale the rewards by $1/(e-1)$ and recover and MDP with same optimal policy that has reward of $1$ at goal-transition states and 0 otherwise. This concludes the equivalence of maximizing $J'(\pi)$ as an alternative to $J(\pi)$ while recovering the same optimal policy.

We now consider a regularized (pessimistic/offline) GCRL problem with the shifted reward functions $e^{r(s,a,g)}$ that maximizes the reward while ensuring the policy visitation stays close to offline data visitation in cosine similarity.
\begin{equation}    
J_{offline}(\pi) = \alpha_1 \E{d^\pi}{ e^{r(s,a,g)}} + \alpha_2 \E{d^\pi(s,a,g)}{\rho(s,a,g)}.
\end{equation}

With a particular instantiation of hyperparameters we show that the $J_{offline}(\pi)$ objective can be simplified to an equivalent objective $J'_{offline}(\pi)$ by setting $\alpha_1 = \beta^2$ and $\alpha_2 = \beta (1-\beta) Z$ where $Z$ is the partition function for $e^{r(s,a,g)}$ over entire $\mathcal{S}\times \mathcal{A} \times \mathcal{G}$.

\begin{equation}
    J'_{offline}(\pi) =\E{\dmix}{\beta e^{r(s,a,g)}+ (1-\beta)\rho(s,a,g).Z}
\end{equation}

\begin{align}
    J'_{offline}(\pi) &=\E{\dmix}{\beta e^{r(s,a,g)}+ (1-\beta)\rho(s,a,g).Z}\\
    &= \beta^2 \E{d^\pi}{e^{r(s,a,g)}} + \beta (1-\beta)Z\E{d^\pi}{\rho(s,a,g)} \\&+ (1-\beta)\E{d^O}{\beta e^{r(s,a,g)}+ (1-\beta)\rho(s,a,g).Z}\beta \\
\end{align}
\begin{align}
        &= \beta^2 \E{d^\pi}{e^{r(s,a,g)}} + \beta (1-\beta)Z\E{d^\pi}{\rho(s,a,g)} + C'\\
    \label{eq:comparing_offline_objectives}
    & = J_{offline}(\pi) + C'
\end{align}

Now that we have shown $J'_{offline}(\pi) \equiv J_{offline}(\pi)$ and hence solving the same optimization problem, we proceed to derive connections with mixture occupancy matching which follows through an application of Jensen's inequality:

\begin{align}
    \log J'_{offline}(\pi) &= \log \E{\dmix}{\beta e^{r(s,a,g)}+ (1-\beta)\rho(s,a,g).Z}\\
    &\ge  \E{\dmix}{\log(\beta e^{r(s,a,g)}+ (1-\beta)\rho(s,a,g).Z)}\\
\end{align}
\begin{align}
        &= \E{\dmix}{\log(\beta q(s,a,g)+ (1-\beta)\rho(s,a,g))}+\log Z\\
    &= -D_{KL}[\dmix\|\demix]  - \mathcal{H}(\dmix) +\log Z
\end{align}

For any $f$-divergence that upperbounds 
 the KL divergence since $Z\ge 1$ we have:
 \begin{equation}
   \log J'_{offline}(\pi) + \frac{1}{\alpha} \mathcal{H}(\dmix) \ge - \frac{1}{\alpha} D_{f}(\dmix\| \demix)  
\end{equation}

Further simplifying using Eq~\ref{eq:comparing_offline_objectives}:
 \begin{equation}
   \log J_{offline}(\pi) + \frac{1}{\alpha} \mathcal{H}(\dmix+C \ge - \frac{1}{\alpha} D_{f}(\dmix\| \demix)  
\end{equation}

\end{proof}

Optimizing the mixture distribution matching objective of \scogo maximizes a variant of \textit{offline/dataset regularized} GCRL objective where the entropy for distribution $\dmix$ is jointly maximized. Therefore we have shown that the minimizing discrepancy of mixture distribution occupancy maximizes a lower bounds to an offline variant of maxent GCRL objective. 

\subsection{Convex Conjugates and $f$-divergences}
\label{ap:convex_conjugation}
We first review the basics of duality in reinforcement learning.  Let $f:\mathbb{R}_+\to \mathbb{R}$ be a convex function. The convex conjugate $f^*: \mathbb{R}_+ \rightarrow \mathbb{R}$ of $f$ is defined by:
\begin{equation}
    \label{eq:cx_conjugate_def}
    f^*(y)= \text{sup}_{x \in \mathbb{R}_+}[xy -f(x)].
\end{equation}
The convex conjugates have the important property that $f^*$ is also convex and the convex conjugate of $f^*$ retrieves back the original function $f$. 
We also note an important relation regarding $f$ and $f^*$: $(f^*)^{'}=(f')^{-1}$, where the $'$ notation denotes first derivative.

Going forward, we would be dealing extensively with $f$-divergences. Informally, $f$-divergences~\citep{renyi1961measures} are a measure of distance between two probability distributions. Here's a more formal definition:

Let $P$ and $Q$ be two probability distributions over a space $\mathcal{Z}$ such that $P$ is absolutely continuous with respect to $Q$ \footnote{
Let $z$ denote the random variable. 
For any measurable set $Z \subseteq \mathcal{Z}$, $Q(z \in Z) = 0$ implies $P(z \in Z) = 0$.}. 
For a function $f: \mathbb{R}_+ \to \mathbb{R}$ that is a convex lower semi-continuous and $f(1)=0$,
the $f$-divergence of $P$ from $Q$ is  
\begin{equation}
\f{P}{Q}=\mathbb{E}_{z\sim Q}\left[f\left(\frac{P(z)}{Q(z)}\right)\right].
 \end{equation}
Table~\ref{tbl:div} lists some common $f$-divergences with their generator functions $f$ and the conjugate functions $f^*$.
\begin{table}[t]
	\centering
	\vskip5pt
	\def\arraystretch{1.5}
	\begin{tabular}{l|c|c}
        \toprule
		Divergence Name & Generator $f(x)$ & Conjugate $f^*(y)$  \\ \hline
		 KL (Reverse) & $x\log{x}$ & $e^{(y-1)}$  \\
		Squared Hellinger & $(\sqrt{x} - 1)^2$ & $\frac{y}{1-y}$ \\ 
		Pearson $\chi ^{2}$ & $(x-1)^2$ & $y + \frac{y^2}{4}$ \\
		Total Variation & $\frac{1}{2} | x-1| $ & $y$ if $y \in [-\frac{1}{2}, \frac{1}{2}]$ otherwise $\infty$  \\
		Jensen-Shannon & $-(x+1)\log(\frac{x+1}{2}) + x \log{x}$ & $-\log{(2- e^y)}$ \\
  \bottomrule
	\end{tabular}
 \vskip5pt
	\caption{List of common $f$-divergences.} 
	\label{tbl:div}
\end{table}

\subsection{SMORe: Dual objective for Offline Goal conditioned reinforcement learning}
\label{ap:closer}
In this section, we derive the dual objective for solving the multi-task occupancy problem formulation for GCRL. First, we derive the original variant of \scogo for the GCRL problem and later derive the action-free \scogo variant for the interested readers.
\newpage
\scogoQ*

\begin{proof}

Recall that: $\dmix :=\beta d(s,a,g)+(1-\beta)\rho(s,a,g)$ and $\demix := \beta q(s,a,g)+(1-\beta)\rho(s,a,g)$. $\dmix$ denotes the mixture between the current agent's joint-goal visitation distribution with an offline transition dataset potentially suboptimal and $\demix$ is the mixture between the expert's visitation distribution with arbitrary experience from the offline transition dataset. Minimizing the divergence between these visitation distributions still solves the occupancy problem, i.e $d^{\pi_g}=q$ when $q$ is achievable. We start with the primal formulation from Eq~\ref{eq:primal_gcrl_f_mixture} for mixture divergence regularization:
\begin{align*}
    &\max_{d(s,a,g)\ge0,\pi(a|s)}  -\f{\dmix}{\demix} \nonumber\\
    &\text{s.t}~~d(s,a,g)=(1-\gamma)\rho_0(s,g).\pi(a|s,g)+\gamma \pi(a|s,g)\sum_{s',a'} d(s',a',g)p(s|s',a').
\end{align*}
Applying Lagrangian duality and convex conjugate~\eqref{eq:cx_conjugate_def} to this problem, we can convert it to an unconstrained problem with dual variables $S(s, a,g)$ defined for all $s, a \in \S \times \A \times \mathcal{G}$:
\begin{align}
    &\max_{\pi,d\ge0} \min_{S(s,a,g)} -\f{\dmix}{\demix}\nonumber\\
    &+ \sum_{s,a,g} S(s,a,g)\left((1-\gamma)d_0(s,g).\pi(a|s,g)+\gamma \sum_{s',a'} d(s',a',g) p(s|s',a')\pi(a|s,g)-d(s,a,g)\right)\\
    &= \max_{\pi,d\ge0} \min_{S(s,a,g)}   (1-\gamma)\E{d_0(s,g),\pi(a|s,g)}{S(s,a,g)} \nonumber \\
    &+  \E{s,a,g\sim d}{\gamma \sum_{s',a'} p(s'|s,a)\pi(a'|s')S(s',a',g)-S(s,a,g)}\\
    &-\f{\dmix}{\demix}
\end{align}
\newline
\begin{align}
&= \max_{\pi,d\ge0} \min_{S(s,a,g)}   \beta(1-\gamma)\E{d_0(s,g),\pi(a|s,g)}{S(s,a,g)} \nonumber \\
    &+ \beta \E{s,a,g\sim d}{\gamma \sum_{s',a'} p(s'|s,a)\pi(a'|s')S(s',a',g)-S(s,a,g)}\nonumber\\
\end{align}
\begin{align}
        &+(1-\beta) \E{s,a,g\sim \rho}{\gamma \sum_{s',a'} p(s'|s,a)\pi(a'|s')S(s',a',g)-S(s,a,g)}\nonumber\\
    &-(1-\beta) \E{s,a,g\sim \rho}{\gamma \sum_{s',a'} p(s'|s,a)\pi(a'|s',g)S(s',a',g)-S(s,a,g)}\\
    &-\f{\dmix}{\demix}
\end{align}

Now using the fact that strong duality holds in this problem we can swap the inner max and min resulting in:
\begin{align}
&= \max_{\pi} \min_{S(s,a,g)}  \max_{\dmix \ge0} \beta(1-\gamma)\E{d_0(s,g),\pi(a|s,g)}{S(s,a,g)} \nonumber \\
    &+ \beta \E{s,a,g\sim d}{\gamma \sum_{s',a'} p(s'|s,a)\pi(a'|s')S(s',a',g)-S(s,a,g)}\nonumber\\
    &+(1-\beta) \E{s,a,g\sim \rho}{\gamma \sum_{s',a'} p(s'|s,a)\pi(a'|s')S(s',a',g)-S(s,a,g)}\nonumber\\
    &-(1-\beta) \E{s,a,g\sim \rho}{\gamma \sum_{s',a'} p(s'|s,a)\pi(a'|s',g)S(s',a',g)-S(s,a,g)}\\
    &-\f{\dmix}{\demix}\\
\end{align}

We can now apply the convex conjugate (Eq.~\eqref{eq:cx_conjugate_def}) definition to obtain a closed form for the inner maximization problem simplifying to:
\begin{align}
    &\max_{\pi(a|s,g)}\min_{S(s,a,g)} \beta (1-\gamma)\E{d_0(s,g),\pi(a|s,g)}{S(s,a,g)} \nonumber\\
    &+\E{s,a,g\sim \demix}{f^*(\gamma \sum_{s',a'} p(s'|s,a,g)\pi(a'|s')S(s',a',g)-S(s,a,g))}\nonumber\\
    & - (1-\beta) \E{s,a,g\sim \rho}{\gamma \sum_{s',a'} p(s'|s,a,g)\pi(a'|s')S(s',a',g)-S(s,a,g)}
\end{align}
This completes our derivation of the \scogo objective. Since strong duality holds (objective convex, constraints linear and feasible), \scogo and the primal mixture occupancy matching share the same global optima $\pi^*_g$.
\end{proof}

\subsection{Action-free SMORe: Dual-V objective for offline goal conditioned reinforcement learning}

The primal problem in Equation~\ref{eq:primal_gcrl_f_mixture} is over-constrained. The objective determines the visitation distribution $d$ uniquely under a fixed policy. It turns out we can further relax this constraint to get an objective that results in the same optimal solution~\citep{agarwal2019reinforcement} $\pi^*_g$ by rewriting our primal formulation as:
\begin{align}
\label{eq:primal_gcrl_v}
    &\max_{d(s,a,g)\ge0}  -\f{\dmix}{\demix} \nonumber\\
    &\text{s.t}~~\sum_a d(s,a,g)=(1-\gamma)\rho_0(s,g)+\gamma \sum_{s',a'} d(s',a',g)p(s|s',a').
\end{align}

\begin{restatable}[]{theorem}{scogoV}
    \label{lemma:dual_gcrl_v}
    Let $y(s,a,g)= \gamma \E{s'\sim p(\cdot|s,a)}{S(s',g)}-S(s,g)$. The action-free dual problem to the multi-task mixture occupancy matching objective (Equation ~\ref{eq:primal_gcrl_v}) is given by:
    \begin{multline}
     \min_{S(s,g)}  \beta (1-\gamma)\E{d_0(s,g)}{S(s,g)} \\+\E{s,a,g\sim \demix}{\max\left(0, (f')^{-1} \left(y(s,a,g)\right)\right)y(s,a,g)- f\left(\max\left(0, (f')^{-1} \left(y(s,a,g)\right)\right)\right)}\nonumber\\
    - (1-\beta) \E{s,a,g\sim \rho}{\gamma \sum_{s'} p(s'|s,a)S(s',g)-S(s,g)}
    \end{multline}
where $S$ is the lagrange dual variable defined as $S:\mathcal{S}\times\mathcal{G}\to \mathbb{R}$ . Moreover, strong duality holds from Slater's conditions the primal and dual share the same optimal solution $\pi^*_g$ for any offline transition distribution $d^O$.
\end{restatable}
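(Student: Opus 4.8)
The plan is to mirror the derivation of Theorem~\ref{lemma:dual_gcrl}, replacing the per-$(s,a,g)$ Bellman-flow constraint by the action-marginalized constraint of Eq.~\ref{eq:primal_gcrl_v}. First I would attach a Lagrange multiplier $S(s,g)$ --- now a function of $(s,g)$ only, since the relaxed program has one constraint per state--goal pair --- to the flow constraint, forming the Lagrangian
\begin{equation*}
-\f{\dmix}{\demix} + \sum_{s,g} S(s,g)\Big[(1-\gamma)\rho_0(s,g) + \gamma\!\sum_{s',a'} d(s',a',g)\,p(s\mid s',a') - \sum_a d(s,a,g)\Big].
\end{equation*}
Swapping the order of summation in the transition term and relabelling $(s',a')\to(s,a)$ collects the coefficient of $d(s,a,g)$ as exactly $y(s,a,g)=\gamma\,\E{s'\sim p(\cdot\mid s,a)}{S(s',g)}-S(s,g)$. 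Note that, unlike in Theorem~\ref{lemma:dual_gcrl}, no policy factor $\pi(a'\mid s')$ appears inside $y$, since the relaxed primal constraint of Eq.~\ref{eq:primal_gcrl_v} involves no policy term; this is what makes the resulting dual \emph{action-free}.

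Next I would rewrite the $d$-dependent term $\E{d}{y}$ through the mixture $\dmix=\beta d+(1-\beta)\rho$: adding and subtracting the offline correction $(1-\beta)\E{\rho}{y}$ (and carrying the matching $\beta$ through the $(1-\gamma)$ boundary term) converts $\E{d}{y}$ into $\E{\dmix}{y}-(1-\beta)\E{\rho}{y}$, so that the divergence is now paired with an expectation taken against its own reference measure $\demix$. Because the primal is convex with linear constraints and is strictly feasible (Slater's condition), strong duality lets me exchange the inner $\max_{d\ge0}$ with $\min_{S}$ and evaluate the inner maximization in closed form.

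The key step, and the one that genuinely differs from Theorem~\ref{lemma:dual_gcrl}, is the inner maximization $\max_{d\ge0}\big[\E{\dmix}{y}-\f{\dmix}{\demix}\big]$. Writing the density ratio $w=\dmix/\demix$ reduces it to the pointwise problem $\max_{w\ge0}\,[\,w\,y-f(w)\,]$. In Theorem~\ref{lemma:dual_gcrl} the full Bellman-flow constraint already forces $d\ge0$ automatically (given $\pi\ge0$), so the unconstrained conjugate $f^*$ of Eq.~\ref{eq:cx_conjugate_def} appears; here the relaxed constraint only pins down the action-marginal $\sum_a d(s,a,g)$, so the nonnegativity $w\ge0$ is a genuinely active constraint. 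Since $w\mapsto wy-f(w)$ is concave, its constrained maximizer is either the interior stationary point $(f')^{-1}(y)$ (using $(f^*)'=(f')^{-1}$ from \cref{ap:convex_conjugation}) or the boundary $w=0$, whichever is feasible; the KKT conditions then give the clamped value $w^\star=\max\!\big(0,(f')^{-1}(y)\big)$, and substituting it back produces the stated integrand $w^\star\, y - f(w^\star)$. I expect this projection onto the nonnegative orthant to be the main obstacle, as it requires carefully checking that the $\max(0,\cdot)$ clamp is the correct KKT solution rather than the smooth conjugate $f^*$ that the action-conditioned proof enjoyed.

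Finally, I would collect the three pieces --- the $\beta(1-\gamma)$ boundary term, the clamped conjugate term under $\demix$, and the $-(1-\beta)$ correction under $\rho$ --- to obtain exactly the objective in the statement. Strong duality gives that primal and dual attain the same value, and since the action-marginalized relaxation of Eq.~\ref{eq:primal_gcrl_v} shares the same optimal occupancy as the original program~\citep{agarwal2019reinforcement}, the optimal $\pi^*_g$ recovered from the optimal $S$ (by reading off the induced ratio $w^\star$ and normalizing over actions at each $(s,g)$) coincides with that of the primal for any offline transition distribution $d^O$.
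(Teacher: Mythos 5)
Your proposal is correct and follows essentially the same route as the paper's proof: the same Lagrangian with a per-$(s,g)$ multiplier collected into $y(s,a,g)$, the same mixture decomposition obtained by adding and subtracting $(1-\beta)\E{\rho}{y}$, and the same KKT analysis of the pointwise problem $\max_{w\ge 0}[wy - f(w)]$ yielding the clamped solution $w^\star = \max\left(0, (f')^{-1}(y)\right)$. Your diagnosis of why the clamp appears here but not in Theorem~\ref{lemma:dual_gcrl} --- the relaxed action-marginal constraint no longer forces $d\ge 0$ automatically, so nonnegativity becomes an active constraint handled via complementary slackness --- is precisely the observation the paper's proof hinges on.
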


\begin{proof}

Proceeding as before and applying Lagrangian duality and convex conjugate~\eqref{eq:cx_conjugate_def} to this problem, we can convert it to an unconstrained problem with dual variables $S(s,g)$ defined for all $s, g \in \S \times \mathcal{G}$:

\begin{align}
    &\max_{d\ge0} \min_{S(s,g)} -\f{\dmix}{\demix}\nonumber\\
    &+ \sum_{s,g} S(s,g)\left((1-\gamma)d_0(s,g)+\gamma \sum_{s',a',g} d(s',a',g) p(s|s',a',g)-\sum_a d(s,a,g)\right)\\
    &= \max_{d\ge0} \min_{S(s,g)}   (1-\gamma)\E{d_0(s,g)}{S(s,g)} \nonumber \\
    &+  \E{s,a,g\sim d}{\gamma \sum_{s'} p(s'|s,a)\pi(a'|s')S(s',g)-S(s,g)}\\
    &-\f{\dmix}{\demix}
\end{align}
\newline
\begin{align}
&= \max_{d\ge0} \min_{S(s,g)}  \beta (1-\gamma)\E{d_0(s,g)}{S(s,g)} \nonumber \\
    &+ \beta \E{s,a,g\sim d}{\gamma \sum_{s'} p(s'|s,a)S(s',g)-S(s,g)}\nonumber\\
    &+(1-\beta) \E{s,a,g\sim d^O}{\gamma \sum_{s'} p(s'|s,a)S(s',g)-S(s,g)}\nonumber\\
    &-(1-\beta) \E{s,a,g\sim d^O}{\gamma \sum_{s'} p(s'|s,a)S(s',g)-S(s,g)}\\
    &-\f{\dmix}{\demix}
\end{align}

Now using the fact that strong duality holds in this problem we can swap the inner max and min resulting in:
\begin{align}
\label{eq:scogo_v_mid}
&= \min_{S(s,g)}  \max_{\dmix \ge0} \beta(1-\gamma)\E{d_0(s,g)}{S(s,g)} \nonumber \\
    &+ \beta \E{s,a,g\sim d}{\gamma \sum_{s'} p(s'|s,a)S(s',g)-S(s,g)}\nonumber\\
    &+(1-\beta) \E{s,a,g\sim d^O}{\gamma \sum_{s'} p(s'|s,a)S(s',g)-S(s,g)}\nonumber\\
    &-(1-\beta) \E{s,a,g\sim d^O}{\gamma \sum_{s'} p(s'|s,a)S(s',g)-S(s,g)}\\
    &-\f{\dmix}{\demix}
\end{align}

Unlike previous case where constraints uniquely define a valid $d$ for any given $\pi$, in this case we need to take into account the hidden constraint $d\ge0$ or equivalently $\dmix \ge 0$. To incorporate the non-negativity constraints we consider the inner maximization separately and derive a closed-form solution that adheres to the non-negativity constraints. Recall $y(s,a,g) = \E{s'\sim p(s,a)}{S(s',g)}- S(s,g)$.

\begin{multline}
    \max_{\dmix\ge0} \E{s,a,g\sim \dmix}{\gamma \sum_{s'} p(s'|s,a)S(s',g)-S(s,g)}\nonumber\\-\f{\dmix}{\demix} 
\end{multline}
 We can now construct the Lagrangian dual to incorporate the constraint $\dmix \ge0$ in its equivalent form $w(s,a,g)\ge 0$  and obtain the following where $w\overset{\Delta}{=}\frac{\dmix}{\demix}$:

\begin{align}
    \label{eq:lagrangian_recoil}
    \max_{w(s,a,g)}\max_{\lambda\ge 0 } \E{s,a\sim \demix}{w(s,a,g)y(s,a,g)}-\E{\demix}{f(w(s,a,g))} + \sum_{s,a,g} \lambda (w(s,a,g)-0)
\end{align}

Since strong duality holds, we can use the KKT constraints to find the solutions $w^*(s,a,g)$ and $\lambda^*(s,a,g)$.

\begin{enumerate}
    \item \textbf{Primal feasibility}: $w^*(s,a,g)\ge 0~~\forall~s,a$
    \item \textbf{Dual feasibility}: $\lambda^*\ge0~~\forall~s,a$
    \item \textbf{Stationarity}: $\demix(-f'(w^*(s,a,g))+y(s,a,g)+\lambda^*(s,a,g))=0~~\forall~s,a$
    \item \textbf{Complementary Slackness}: $(w^*(s,a,g)-0)\lambda^*(s,a,g)=0~~\forall~s,a$
\end{enumerate}

Using stationarity we have the following:
\begin{equation}
    f'(w^*(s,a,g)) = y(s,a,g)+\lambda^*(s,a,g)~~\forall~s,a,g
\end{equation}
Now using complementary slackness,
only two cases are possible $w^*(s,a,g)\ge 0$ or $\lambda^*(s,a,g)\ge 0$.  
Combining both cases we arrive at the following solution for this constrained optimization:
\begin{equation}
    w^*(s,a) = \max\left(0,{f'}^{-1}(y(s,a,g)) \right)
\end{equation}


Using the optimal closed-form solution  ($w^*$) for $\dmix$ of the inner optimization in  Eq.~\eqref{eq:scogo_v_mid}  we obtain

\begin{align}
 &\min_{S(s,a)}  \beta (1-\gamma)\E{d_0(s)}{S(s,g)} \nonumber\\
    &+\E{s,a,g\sim \demix}{\max\left(0, (f')^{-1} \left(y(s,a,g)\right)\right)y(s,a,g)-\alpha f\left(\max\left(0, (f')^{-1} \left(y(s,a,g)\right)\right)\right)}\nonumber\\
    & - (1-\alpha) \E{s,a\sim \rho}{\gamma \sum_{s'} p(s'|s,a)\pi(a'|s')S(s',g)-S(s,g)}
\end{align}

For deterministic dynamics, this reduces to the action-free \scogo objective:
\begin{align}
 &\min_{S(s,a)}  \beta (1-\gamma)\E{d_0(s)}{S(s,g)} \nonumber\\
    &+\E{s,a\sim \demix}{\max\left(0, (f')^{-1} \left(y(s,a,g)\right)\right)y(s,a,g)- f\left(\max\left(0, (f')^{-1} \left(y(s,a,g)\right)\right)\right)}\nonumber\\
    & - (1-\beta) \E{s,a\sim \rho}{\gamma S(s',g)-S(s,g)}
\end{align}

where $y(s,a,g) = \gamma S(s',g)-S(s,g)$.

Note that we no longer need actions in the offline dataset to learn an optimal goal conditioned score function. This score function can be used to learn presentation in action-free datasets as well as for transfer of value function across differing action-modalities where agents share the same observation space (eg. images as observations).

\end{proof}

\section{\scogo algorithmic details}
\label{app:algo_details}
\subsection{\scogo with common $f$-divergences}

\textbf{a. KL divergence}

We consider the reverse KL divergence and start with the general \scogo objective:
 \begin{multline}
    \label{eq:scogo_sp}
        \max_{\pi_g}\min_{S}  \beta (1-\gamma)\E{d_0,\pi_g}{S(s,a,g)} +\E{s,a,g\sim \demix}{f^*(\gamma P^{\pi_g} S(s,a,g )-S(s,a,g))}  \\-(1-\beta) \E{s,a,g\sim \rho}{\gamma P^{\pi_g} S(s,a,g)-S(s,a,g)}
\end{multline}

Plugging in the conjugate $f^*$ for reverse KL divergence we get:

\begin{multline}
\label{eq:original_kl_scogo}
    \max_{\pi_g}\min_{S}  \beta (1-\gamma)\E{d_0,\pi_g}{S(s,a,g)} +\E{s,a,g\sim \demix}{e^{(\gamma P^{\pi_g} S(s,a,g )-S(s,a,g))}} \\-(1-\beta) \E{s,a,g\sim \rho}{\gamma P^{\pi_g} S(s,a,g)-S(s,a,g)}
\end{multline}

Using the telescoping sum for the last term in the objective above, we can simplify it as follows:

\begin{multline}
    \max_{\pi_g}\min_{S}  \beta (1-\gamma)\E{d_0,\pi_g}{S(s,a,g)} +\E{s,a,g\sim \demix}{e^{(\gamma P^{\pi_g} S(s,a,g )-S(s,a,g))}} \\+(1-\beta) \E{s,g\sim d_0,a\sim \rho(\cdot|s,g)}{ S(s,a,g)}
\end{multline}

With the initial state distribution $d_0$ set to the offline dataset distribution $\rho$, and  Since our initial state distribution is the same as offline data distribution, we get:

\begin{multline}
    \max_{\pi_g}\min_{S}  \beta (1-\gamma)\E{\rho,\pi_g}{S(s,a,g)} +\E{s,a,g\sim \demix}{e^{(\gamma P^{\pi_g} S(s,a,g )-S(s,a,g))}} \\+(1-\beta) \E{\rho}{ S(s,a,g)}
\end{multline}

Collecting terms together we get:

\begin{multline}
\label{eq:unstable_kl_scogo}
    \max_{\pi_g}\min_{Q}  \E{\rho}{\E{a\sim\pi}{\beta (1-\gamma) S(s,a,g)}+\E{a\sim \rho}{(1-\beta) S(s,a,g)}} \\+\E{s,a,g\sim \demix}{e^{(\gamma P^{\pi_g} S(s,a,g )-S(s,a,g))}} 
\end{multline}

The objective for \scogo with reverse KL divergence pushes down the "score" of offline dataset transitions selectively (without pushing down score of the goal-transition distribution) while minimizing the term resembling bellman regularization that also encourages increasing score at the mixture dataset jointly over the offline dataset as well as the goal transition distribution. 






\textbf{b. Pearson chi-squared divergence}

We consider the Pearson $\chi^2$ and start with the general \scogo objective:
 \begin{multline}
        \max_{\pi_g}\min_{S}  \beta (1-\gamma)\E{d_0,\pi_g}{S(s,a,g)} +\E{s,a,g\sim \demix}{f^*(\gamma P^{\pi_g} S(s,a,g )-S(s,a,g))}  \\-(1-\beta) \E{s,a,g\sim \rho}{\gamma P^{\pi_g} S(s,a,g)-S(s,a,g)}
\end{multline}

With the initial state distribution $d_0$ set to the offline dataset distribution $\rho$, and plugging in the conjugate $f^*$ for Pearson $\chi^2$ divergence we get:
 \begin{multline}
        \max_{\pi_g}\min_{S}  \beta (1-\gamma)\E{d_0,\pi_g}{S(s,a,g)} +0.25\E{s,a,g\sim \demix}{(\gamma P^{\pi_g} S(s,a,g )-S(s,a,g))^2}  \\+\E{s,a,g\sim \demix}{(\gamma P^{\pi_g} S(s,a,g )-S(s,a,g))}-(1-\beta) \E{s,a,g\sim \rho}{\gamma P^{\pi_g} S(s,a,g)-S(s,a,g)}
\end{multline}
Using the fact that $\demix = \beta q(s,a,g)+(1-\beta)\rho(s,a,g)$, we can further simplify the above equation to:
 \begin{multline}
        \max_{\pi_g}\min_{S}  \beta (1-\gamma)\E{d_0,\pi_g}{S(s,a,g)} +0.25\E{s,a,g\sim \demix}{(\gamma P^{\pi_g} S(s,a,g )-S(s,a,g))^2}  \\+\beta \E{s,a,g\sim q}{(\gamma P^{\pi_g} S(s,a,g )-S(s,a,g))}
\end{multline}  

Collecting terms together we get:

 \begin{multline}
        \max_{\pi_g}\min_{S}  \beta (1-\gamma)\E{\rho,\pi_g}{S(s,a,g)} +\beta \E{s,g\sim q,a\sim\pi_g} {\gamma P^{\pi_g} S(s,a,g )}\\
        -\beta \E{s,a,g\sim q}{S(s,a,g)}
        +0.25\E{s,a,g\sim \demix}{(\gamma P^{\pi_g} S(s,a,g )-S(s,a,g))^2}  
\end{multline}  

Observing the equation above, we note that the first two terms decrease score at offline data distribution as well as the goal transition distribution when actions are sampled according to the policy $\pi_g$. Simultaneously the third term pushes score up for the $\set{s,a,g}$ tuples that are sampled from goal transition distribution. Finally the last term encouraged enforces a bellman regularization enforcing smoothness is the scores of neighbouring states.


\section{\scogo experimental details}
\label{ap:experimental_details}
\subsection{Tasks with observations as states}
\textbf{Environments:} For the offline GCRL experiments we consider the benchmark used in prior work GoFar  and extend it with locomotion tasks. For the manipulations tasks we consider the Fetch environment and a dextrous shadow hand environment. Fetch environments~\citep{plappert2018multi} consists of a manipulator with seven degrees of freedom along with a parallel gripper. The set of environments get a sparse reward of 1 when the goal is within 5 cm and 0 otherwise. The action space is 4 dimensional (3 dimension cartesian control + 1 dimension gripper control). The shadow hand is 24 DOF manipulator with 20-dimensional action space. The goal is 15-dimension specifying the position for each of the five fingers. The tolerance for goal reaching is 1 cm. For the locomotion environments, the task is to achieve a particular velocity in the x direction and stay at the velocity. For HalfCheetah, the target velocity is set to 11.0 and for Ant the target velocity is 5.0. For locomotion environments, the tolerance for goal reaching if 0.5. The MuJoCo environments used in this work are \href{https://github.com/deepmind/mujoco/blob/main/LICENSE}{licensed under CC BY 4.0}. 

\textbf{Offline Datasets:} We use existing datasets from the offline GCRL benchmark used in~\citep{ma2022far} for all manipulation tasks except Reacher, SawyerReach, and SawyerDoor. For Reacher, SawyerReach, and SawyerDoor we use existing datasets from~\citep{yang2022rethinking}. These datasets are comprised on x\% random data and (100-x)\% expert data depending on the coverage over goals reached in individual datasets. We create our own datasets for locomotion by using 'random/medium/medium-replay' data as our offline (suboptimal) data combined with 30 trajectories from corresponding 'expert' datasets. The datasets used from D4RL are  \href{https://github.com/Farama-Foundation/D4RL/blob/master/LICENSE}{licensed under Apache 2.0}.

\textbf{Baselines:} To benchmark and analyze the performance of our proposed methods for offline imitation learning with suboptimal data, we consider the following representative baselines in this work: GoFAR \citep{ma2022far}, WGCSL \citep{yang2022rethinking}, GCSL \citep{ghosh2019learning}, and Actionable Models \citep{chebotar2021actionable}, Contrastive RL~\citep{eysenbach2020c} and GC-IQL~\cite{kostrikov2021offline}. GoFAR is a dual occupancy matching approach to GCRL that formulates it as a weighted regression problem. WGCSL and GSCL use goal-conditioned behavior cloning with goal relabelling as the base algorithms and WGCL uses weights to learn improved policy over GCSL. Actionable models uses conservative learning with goal chaining to learn goal-reaching behaviours using offline datasets. Contrastive RL treats GCRL as a classification problem - contrastive goals that are achieved in trajectory from random goals. Finally, GC-IQL extends the single task offline RL algorithm IQL to GCRL. 

The open-source implementations of the baselines GoFAR, WGCSL, GCSL, Actionable models, Contrastive RL and IQL are provided by the authors \citep{ma2022far} and employed in our experiments. We use the hyperparameters provided by the authors, which are consistent with those used in the original GoFAR paper, for all the MuJoCo locomotion and manipulation environments. We implement contrastive learning using the code from \href{https://github.com/chongyi-zheng/stable_contrastive_rl}{Contrastive RL repository}. GC-IQL is implemented using code from author's implementation \href{https://github.com/ikostrikov/implicit_q_learning}{found here}. 

\textbf{Architecture and Hyperparameters}
For the baselines, we use tuned hyperparameters from previous works that were tuned on the same set of tasks and datasets. Implementation for \scogo shares the same network architecture as baselines. GoFAR additionally requires training a discriminator. For all
experiments, all methods are trained for 10 seeds with each training run. Fetch manipulation (except Push) tasks are trained for a maximum of 400k minibatch updates of size 512 whereas all other environments training is done for 1M minibatch updates. \reb{The expectile parameter $\tau$ was searched over [0.65 ,0.7,0.8,0.85]. For the results shown in table~\ref{table:offline-gcrl-full-discounted-return}, Fetch and Sawyer environments use $\tau=0.8$, Locomotion and Adroit hand environments use $\tau=0.7$.  In general, the HER ratio is searched over [0.2,0.5,0.8,1.0] for all methods and the best one was selected. HER ratio of 0.8 gave best performance across all tasks for \scogo. }

The architectures and hyperparameters for all methods are reported in Table~\ref{tab:smore_hp}.
\begin{table}[h!]
  \begin{center}
    \begin{tabular}{l|c}
      \toprule 
      \textbf{Hyperparameter} & \textbf{Value}\\
      \midrule 
      Policy updates $n_{pol}$ & 1\\
      Policy learning rate & 3e-4\\
      Value learning rate & 3e-4\\     
      MLP layers &  (256,256)\\
      LR decay schedule & cosine\\
      Discount factor & 0.99\\
      LR decay schedule & cosine\\
      Batch Size            & 512\\
      Mixture ratio $\beta$ & 0.5\\
      \reb{Policy temperature ($\alpha$)}  & 3.0\\
      \bottomrule 
    \end{tabular}
  \end{center}
  \caption{Hyperparameters for \scogo. }
      \label{tab:smore_hp}
\vspace{10pt}
\end{table}

\subsection{Tasks with observations as images}

\begin{table}[h]
\label{tab:hparams}
\begin{center}
\resizebox{0.5\textwidth}{!}{
\setlength{\tabcolsep}{4pt}
\begin{tabular}{p{5cm}|c}
\toprule
Hyperparameters & Values  \\
\midrule
batch size & 2048  \\ \midrule
number of training epochs & 300 \\ \midrule
number of training iterations per epoch & 1000  \\ \midrule
Horizon & 400 \\ \midrule
image encoder architecture & 3-layer CNN \\ \midrule
policy network architecture & (1024, 4) MLP  \\ \midrule
critic/score network architecture & (1024, 4) MLP \\ \midrule
weight initialization for final layers of critic and policy & $\textsc{Unif}[-10^{-12}, 10^{-12}]$  \\ \midrule
policy std deviation & 0.15  \\ \midrule
representation dimension & 16 \\ \midrule
data augmentation & random cropping \\ \midrule
discount factor & 0.99  \\ \midrule
learning rate  & 3e-4  \\
\bottomrule
\end{tabular}
}
\end{center}
\caption{\reb{\footnotesize Hyperparameters for image-observation GCRL from~\cite{zheng2023stabilizing}.}}
\end{table}

\reb{\paragraph{Tasks and dataset} Our experiments use a suite of simulated goal-conditioned control tasks based on prior work~\cite{zheng2023stabilizing}. The observations and goals are $48 \times 48 \times 3$ RGB images. The evaluation tasks require multi-trajectory stitching whereas the dataset contains trajectories solving only parts of the evaluation tasks. }

\reb{ In the simulation, we employed an offline manipulation dataset comprising near-optimal examples of basic action sequences, including the initiation of drawer movement, the displacement of blocks, and the grasping of items. The demonstrations exhibit variations in length, ranging between 50 to 100 horizon, while the offline dataset contains a total of 250,000 state transitions in its entirety. It is important to note that the offline trajectories do not depict a complete progression from the initial condition to the final objective. For the purposes of evaluation, we consider 4 tasks similar to~\cite{zheng2023stabilizing}, against which we compare the success rates in realizing these specified objectives.}

\reb{\paragraph{Baseline and \scogo implementations.} We use the open-source implementation of \href{https://github.com/chongyi-zheng/stable_contrastive_rl}{Stable-contrastive RL} to use the same design decisions and implement \scogo, GC-IQL on that codebase. We use the same hyperparameters as the stable-contrastive RL implementation for the shared hyperparameters. The hyperparameters for \scogo were kept the same as in Table~\ref{tab:smore_hp}.}

\section{Additional experiments}
\label{ap:additional_experiments}

\subsection{Results on offline GCRL benchmark with varying expert coverage in offline dataset}
We ablate the effect of dataset quality on the performance of an offline GCRL method in this sections. Table~\ref{tab:5_percent_data},~\ref{tab:2d5_percent_data},~\ref{tab:1_percent_data} show performance of all methods with 5\%, 2.5\% and 1\% expert data in the offline dataset respectively.

\begin{table}
\centering
\resizebox{0.8\textwidth}{!}{
\begin{tabular}{l|rr|r|rr}
\toprule
\multicolumn{1}{c|}{\textbf{Task}}
& \multicolumn{2}{c|}{\textbf{ Behavior cloning}}
& \multicolumn{1}{c}{\textbf{Contrastive RL}}& \multicolumn{2}{c}{\textbf{RL+sparse reward}} \\ 
  & \textbf{WGCSL} & \textbf{GCSL} & \textbf{CRL} & \textbf{AM}  & \textbf{IQL} \\ 
\midrule 
Reacher  &15.30 $\pm${\scriptsize0.58 }& 14.01 $\pm${\scriptsize 0.36}&16.62 $\pm${\scriptsize 2.09} & 23.68$\pm${\scriptsize 0.58}&  8.86 $\pm$ {\scriptsize0.61} \\ 
SawyerReach  &14.06 $\pm${\scriptsize 0.08 } & 12.05$\pm${\scriptsize1.23} & 23.03$\pm${\scriptsize 1.17} &  23.37$\pm${\scriptsize 2.29}& 36.19  $\pm$ {\scriptsize 0.01}\\ 
SawyerDoor  & 16.79$\pm${\scriptsize 0.75} & 18.29$\pm${\scriptsize0.94} & 12.26 $\pm${\scriptsize 3.94} & 16.63 $\pm${\scriptsize 0.76}&   29.31$\pm$ {\scriptsize0.88}\\
FetchPick   &  6.87$\pm$ {\scriptsize 0.77} & 6.54 $\pm$ {\scriptsize1.85}  &  0.21$\pm$ {\scriptsize0.29}  & 0.45 $\pm$ {\scriptsize0.32} &   15.24$\pm$ {\scriptsize1.27} \\
FetchPush  &  10.62$\pm$ {\scriptsize0.98 } & 12.38 $\pm$ {\scriptsize1.10}   & 3.60 $\pm$ {\scriptsize0.59}  & 2.74 $\pm$ {\scriptsize0.70} &  19.95 $\pm$ {\scriptsize1.94} \\
FetchSlide  &  2.62$\pm$ {\scriptsize 1.15} & 2.03 $\pm$ {\scriptsize0.01} &  0.41$\pm$ {\scriptsize0.03}  & 0.31 $\pm$ {\scriptsize}0.31 & 3.25  $\pm$ {\scriptsize1.02}  \\ 
\midrule
\bottomrule
\end{tabular}
}
\caption{Discounted Return for the offline GCRL benchmark with 5\% expert data. Results are averaged over 10 seeds.}
\label{tab:5_percent_data}
\end{table}

\begin{table}
\centering
\resizebox{0.8\textwidth}{!}{
\begin{tabular}{l|rr|r|rr}
\toprule
\multicolumn{1}{c|}{\textbf{Task}}
& \multicolumn{2}{c|}{\textbf{ Behavior cloning}}
& \multicolumn{1}{c}{\textbf{Contrastive RL}}& \multicolumn{2}{c}{\textbf{RL+sparse reward}} \\ 
 & \textbf{WGCSL} & \textbf{GCSL} & \textbf{CRL} & \textbf{AM}  & \textbf{IQL} \\ 
\midrule 
Reacher  & 13.03$\pm${\scriptsize 0.56}& 12.17 $\pm${\scriptsize 0.8}&19.63 $\pm${\scriptsize 3.09} & 24.78$\pm${\scriptsize0.23 }&   4.44$\pm$ {\scriptsize0.70} \\ 
SawyerReach   & 11.455$\pm${\scriptsize 1.37} & 11.34$\pm${\scriptsize1.18} &25.35 $\pm${\scriptsize 0.8} &  25.19$\pm${\scriptsize 0.61}&  35.73 $\pm$ {\scriptsize0.22}\\ 
SawyerDoor & 16.79$\pm${\scriptsize 0.29 } & 13.20$\pm${\scriptsize0.53} & 14.78 $\pm${\scriptsize 5.29} &  16.59 $\pm${\scriptsize 1.39 }&  16.87 $\pm$ {\scriptsize4.21}\\
FetchPick   &  4.39$\pm$ {\scriptsize1.35 } & 4.99 $\pm$ {\scriptsize0.11}  &  0.21$\pm$ {\scriptsize0.29}  &  0.24$\pm$ {\scriptsize0.27} &  11.79 $\pm$ {\scriptsize1.78} \\
FetchPush   &  8.01$\pm$ {\scriptsize 1.96} &  8.04$\pm$ {\scriptsize0.34}   &  3.60$\pm$ {\scriptsize0.59}  &  2.02$\pm$ {\scriptsize0.48} &  19.66 $\pm$ {\scriptsize1.69} \\
FetchSlide   &  2.33$\pm$ {\scriptsize0.23 } &  2.37 $\pm$ {\scriptsize0.83} & 0.44 $\pm$ {\scriptsize0.016}  &  0.45$\pm$ {\scriptsize0.44} &   1.83$\pm$ {\scriptsize1.31}  \\ 
\midrule
\bottomrule
\end{tabular}
}
\caption{Discounted Return for the offline GCRL benchmark with 2.5\% expert data. Results are averaged over 10 seeds.}
\label{tab:2d5_percent_data}
\end{table}

\begin{table}
\centering
\resizebox{0.8\textwidth}{!}{
\begin{tabular}{l|rr|r|rr}
\toprule
\multicolumn{1}{c|}{\textbf{Task}}
& \multicolumn{2}{c|}{\textbf{ Behavior cloning}}
& \multicolumn{1}{c}{\textbf{Contrastive RL}}& \multicolumn{2}{c}{\textbf{RL+sparse reward}} \\ 
 & \textbf{WGCSL} & \textbf{GCSL} & \textbf{CRL} & \textbf{AM}  & \textbf{IQL} \\ 
\midrule 
Reacher   & 13.56$\pm${\scriptsize 0.69}& 12.27 $\pm${\scriptsize 1.45}& 17.94$\pm${\scriptsize 3.71} & 24.89$\pm${\scriptsize 0.34}&  4.28 $\pm$ {\scriptsize0.92} \\ 
SawyerReach   &10.71 $\pm${\scriptsize 0.69} & 11.79$\pm${\scriptsize1.46} & 25.61$\pm${\scriptsize 0.39} &  25.54$\pm${\scriptsize 0.95}&  31.31  $\pm$ {\scriptsize2.08 }\\ 
SawyerDoor  &15.18 $\pm${\scriptsize 0.81} &11.89$\pm${\scriptsize1.51} &  10.26$\pm${\scriptsize 4.61} &  18.04$\pm${\scriptsize 1.8}& 17.11  $\pm$ {\scriptsize4.45}\\
FetchPick   & 1.89 $\pm$ {\scriptsize1.22 } & 3.30 $\pm$ {\scriptsize0.66}  & 0.42 $\pm$ {\scriptsize0.29}  &  0.41 $\pm$ {\scriptsize0.22} &  7.90 $\pm$ {\scriptsize1.22} \\
FetchPush & 6.44 $\pm$ {\scriptsize 3.64} & 6.43 $\pm$ {\scriptsize0.56}   & 1.69 $\pm$ {\scriptsize1.56}  &  2.63$\pm$ {\scriptsize3.04} &  7.11 $\pm$ {\scriptsize2.60} \\
FetchSlide & 1.77 $\pm$ {\scriptsize 0.24} &  1.11$\pm$ {\scriptsize0.26} & 0.0 $\pm$ {\scriptsize0.0}  & 0.10 $\pm$ {\scriptsize0.11} &  0.80 $\pm$ {\scriptsize0.48}  \\ 
\midrule
\bottomrule
\end{tabular}
}
\caption{Discounted Return for the offline GCRL benchmark with 1\% expert data. Results are averaged over 10 seeds.}
\label{tab:1_percent_data}
\end{table}

\subsection{Success Rate and Final distance to goal on Manipulation tasks}
Table~\ref{tab:success_rate} and Table~\ref{tab:final_distance_to_goal} reports the success rate and final distance to goal metrics on manipulation tasks.



\subsection{Robustness of mixture distribution parameter $\beta$}

We find that \scogo is quite robust to the mixture distribution parameter $\beta$ except in the environment FetchPush where $\beta=0.5$ is the most performant. Table~\ref{table:robustness_mixture} shows this result empirically.

\subsection{How much does HER contribute to the performance improvements of SMORE?}

\reb{GoFAR demonstrated improved performance without relying on HER. The authors also demonstrated that HER is detrimental to GoFAR's performance. In this section, we aim to conduct a similar stufy and see how much HER contributed to \scogo's performance. Table~\ref{table:smore_without HER} shows that HER gives \scogo a small performance boost and show that \scogo is still able to outperform GoFAR without HER.}

\subsection{Comparison to variants of GoFAR}

\reb{GoFAR formulates GCRL as an occupancy matching problem, but it is also suggested that using a discriminator is optional. Without a discriminator, GoFAR reduces to a sparse reward RL problem. Table~\ref{table:gofar_without_reward} shows that GoFAR achieves poor performance when a reward function is substituted in place on an discriminator. We also study if the performance benefits we obtain are due to the offline learning strategy we used from IQL. We modify GoFAR with discriminator reward to use expectile loss for value learning and AWR for policy learning. Results in Table~\ref{table:gofar_without_reward} shows that no performance gains were observed.}

\subsection{Offline GCRL with purely suboptimal data}

\reb{In this experiment, we study offline GCRL from purely suboptimal datasets. Except FetchReach, these datasets provide very sparse coverage of goals expected to reach in evaluation. Table~\ref{table:gofar_without_expert_data} shows the robustness of \scogo even in the setting of poor quality offline data.}

\subsection{Comparison with in-sample learning methods}

\reb{In-sample learning methods perform value improvement using bellman backups without OOD action sampling. This makes them a particularly suitable candidate for offline setting. We compare against a number of recent in-sample learning methods, IQL~\citep{kostrikov2021offline}, SQL/EQL~\cite{xu2023offline} and XQL~\citep{garg2023extreme}. Table~\ref{table:smore_in_sample_rl} compares \scogo to in-sample learning methods adapted to GCRL.}

\subsection{Ablating components of \scogo for offline setting}

\reb{In offline setting, it is well known that bellman backups suffer from overestimation and results in poor policy performance. We validate the utility of the components used in this work in Table~\ref{table:smore_without_practical_changes} : expectile loss function and constrained policy optimization with AWR.}

\begin{table}
\centering
\resizebox{0.6\textwidth}{!}{
\begin{tabular}{l|c|c|c|c}
\toprule
\multicolumn{1}{c|}{\textbf{Task}}
& \multicolumn{1}{c|}{\textbf{$\beta=0.5$}}
& \multicolumn{1}{c|}{\textbf{$\beta=0.7$}} & \multicolumn{1}{c}{\textbf{$\beta=0.8$}}& \multicolumn{1}{c}{\textbf{$\beta=0.9$}} \\ 
\midrule  
FetchReach &35.08 $\pm$ {\scriptsize0.54}&36.57 $\pm$ {\scriptsize0.20} &36.59$\pm$ {\scriptsize0.30}&  36.30$\pm$ {\scriptsize0.30}\\ 
FetchPick   &26.47$\pm$ {\scriptsize0.34}&27.04$\pm$ {\scriptsize0.81}&27.43$\pm$ {\scriptsize0.97}&  27.89 $\pm$ {\scriptsize1.19}\\
FetchPush&26.83 $\pm$ {\scriptsize1.21}&16.20$\pm$ {\scriptsize1.11}&11.50$\pm$ {\scriptsize1.19} & 13.85$\pm$ {\scriptsize5.53} \\
FetchSlide&4.99$\pm$ {\scriptsize0.40}&3.76$\pm$ {\scriptsize0.75}&3.43$\pm$ {\scriptsize2.4}&4.10$\pm$ {\scriptsize1.20}\\ 
\midrule
\bottomrule
\end{tabular}
}
\caption{Discounted Return for the offline GCRL benchmark with varying mixture coefficients in offline dataset. Results are averaged over 10 seeds.}
\label{table:robustness_mixture}
\end{table}

\begin{table}
\resizebox{\textwidth}{!}{
\begin{tabular}{l|rr|rr|r|rr}
\toprule
\multicolumn{1}{c|}{\textbf{Task}}
&\multicolumn{2}{c|}{\textbf{Occupancy Matching}}& \multicolumn{2}{c|}{\textbf{Behavior cloning}}& \multicolumn{1}{c|}{\textbf{Contrastive RL}}
& \multicolumn{2}{c}{\textbf{RL+sparse reward}} \\ 
 & \textbf{SMORe} & \textbf{GoFAR}  & \textbf{WGCSL} & \textbf{GCSL} & \textbf{CRL}& \textbf{AM}  & \textbf{IQL} \\ 
\midrule 
Reacher  & 0.875$\pm${\scriptsize 0.07}& 0.90$\pm${\scriptsize 0.01} & 0.97$\pm${\scriptsize0.014 }& 0.92 $\pm${\scriptsize 0.08}& 0.76$\pm${\scriptsize 0.74} & 1.0$\pm${\scriptsize 0.1} &  0.26 $\pm$ {\scriptsize0.06}\\ 
SawyerReach  &0.98$\pm${\scriptsize 0.014}& 0.75$\pm${\scriptsize0.04} & 1.0$\pm${\scriptsize 0.0} & 0.98$\pm${\scriptsize0.02} & 0.98$\pm${\scriptsize 0.018} &  1.0$\pm${\scriptsize 0.1}&  0.81 $\pm$ {\scriptsize0.01}\\ 
SawyerDoor  &0.875$\pm${\scriptsize 0.038}& 0.5$\pm${\scriptsize0.12} &0.78 $\pm${\scriptsize 0.10} & 0.5$\pm${\scriptsize 0. 12} &  0.22$\pm${\scriptsize 0.11} &  0.3$\pm${\scriptsize 0.11}&  0.84 $\pm$ {\scriptsize0.06}\\
FetchReach  &1.0$\pm$ {\scriptsize 0.0}& 1.0 $\pm$ {\scriptsize0.0} & 1.0$\pm$ {\scriptsize 0.0} & 0.98 $\pm$ {\scriptsize0.05}   &  1.0$\pm$ {\scriptsize 0.0} &  1.0$\pm$ {\scriptsize1.0} &  1.0 $\pm$ {\scriptsize0.0} \\ 
FetchPick  &0.925 $\pm$ {\scriptsize 0.045}& 0.84 $\pm$ {\scriptsize0.09} &  0.54$\pm$ {\scriptsize 0.16} & 0.54 $\pm$ {\scriptsize0.20}  & 0.42 $\pm$ {\scriptsize0.29}  & 0.78 $\pm$ {\scriptsize0.15}  &  0.86 $\pm$ {\scriptsize0.11}\\
FetchPush  &0.90$\pm$ {\scriptsize 0.07}&  0.88$\pm$ {\scriptsize0.09} &  0.76$\pm$ {\scriptsize 0.12} & 0.72 $\pm$ {\scriptsize 0.15}   &  0.06$\pm$ {\scriptsize0.03}  &  0.67$\pm$ {\scriptsize0.14}  &  0.65 $\pm$ {\scriptsize0.052}\\
FetchSlide  &0.315$\pm$ {\scriptsize 0.07}& 0.18 $\pm$ {\scriptsize0.12} &  0.18$\pm$ {\scriptsize 0.14} &  0.17$\pm$ {\scriptsize0.13} &0.0  $\pm$ {\scriptsize0.0}  &  0.11$\pm$ {\scriptsize0.09}  &   0.26$\pm$ {\scriptsize0.057} \\ 
HandReach & 0.47$\pm$ {\scriptsize 0.11}& 0.40 $\pm$ {\scriptsize}0.20 &  0.25$\pm$ {\scriptsize 0.23} &  0.047$\pm$ {\scriptsize0.10} &  0.0$\pm$ {\scriptsize0.0} &  0.0 $\pm$ {\scriptsize0.0} &  0.0 $\pm$ {\scriptsize0.0}\\ 
\midrule
\bottomrule
\end{tabular}
}
\caption{Success Rate for the offline GCRL benchmark with 10\% expert data. Results are averaged over 10 seeds.}
\label{tab:success_rate}
\end{table}

\begin{table}
\resizebox{\textwidth}{!}{
\begin{tabular}{l|rr|rr|r|rr}
\toprule
\multicolumn{1}{c|}{\textbf{Task}}
&\multicolumn{2}{c|}{\textbf{Occupancy Matching}}& \multicolumn{2}{c|}{\textbf{Behavior cloning}}& \multicolumn{1}{c|}{\textbf{Contrastive RL}}
& \multicolumn{2}{c}{\textbf{RL+sparse reward}} \\ 
 & \textbf{SMORe} & \textbf{GoFAR}  & \textbf{WGCSL} & \textbf{GCSL} & \textbf{CRL}& \textbf{AM}  & \textbf{IQL} \\ 
\midrule 
Reacher  & 0.02$\pm${\scriptsize 0.01}& 0.03$\pm${\scriptsize 0.01} & 0.011$\pm${\scriptsize0.01 }& 0.016 $\pm${\scriptsize 0.00}& 0.05$\pm${\scriptsize 0.03} & 0.013$\pm${\scriptsize 0.00} &   0.12$\pm$ {\scriptsize0.005}\\ 
SawyerReach  &0.008$\pm${\scriptsize 0.004}& 0.04$\pm${\scriptsize0.00} & 0.004$\pm${\scriptsize 0.00} & 0.00$\pm${\scriptsize 0.00} & 0.01$\pm${\scriptsize 0.01} & 0.01 $\pm${\scriptsize0.00 }& 0.053  $\pm$ {\scriptsize0.004}\\ 
SawyerDoor  &0.02$\pm${\scriptsize 0.029}& 0.18$\pm${\scriptsize0.00} & 0.011$\pm${\scriptsize0.00 } & 0.017$\pm${\scriptsize0.01} &  0.14$\pm${\scriptsize 0.07} & 0.06 $\pm${\scriptsize0.01 }&  0.019 $\pm$ {\scriptsize0.01}\\
FetchReach  &0.004$\pm$ {\scriptsize 0.0012}&  0.018$\pm$ {\scriptsize0.003} & 0.007$\pm$ {\scriptsize 0.0043} & 0.008 $\pm$ {\scriptsize0.008}   & 0.007 $\pm$ {\scriptsize0.001} & 0.007 $\pm$ {\scriptsize0.001} &   0.002$\pm$ {\scriptsize0.001} \\ 
FetchPick  & 0.04$\pm$ {\scriptsize0.018 }& 0.036 $\pm$ {\scriptsize0.013} &  0.094$\pm$ {\scriptsize 0.043} &  0.108$\pm$ {\scriptsize0.06}  & 0.25 $\pm$ {\scriptsize0.025}  &  0.04$\pm$ {\scriptsize0.02}  &   0.04$\pm$ {\scriptsize0.012}\\
FetchPush  &0.03$\pm$ {\scriptsize 0.003}&  0.033$\pm$ {\scriptsize0.008} &  0.041$\pm$ {\scriptsize 0.02} &  0.042$\pm$ {\scriptsize0.018}   &  0.15$\pm$ {\scriptsize0.036}  &  0.07$\pm$0.039 {\scriptsize}  &   0.05$\pm$ {\scriptsize0.006}\\
FetchSlide &0.09$\pm$ {\scriptsize0.012 }& 0.12 $\pm$ {\scriptsize0.02} &  0.173$\pm$ {\scriptsize 0.04} &  0.204$\pm$ {\scriptsize0.051} &  0.42$\pm$ {\scriptsize0.05}  &  0.198$\pm$ {\scriptsize0.059}  &   0.09$\pm$ {\scriptsize0.013} \\ 
HandReach & 0.039$\pm$ {\scriptsize 0.0108}&  0.024$\pm$ {\scriptsize0.009} & 0.035 $\pm$ {\scriptsize 0.012} &  0.038$\pm$ {\scriptsize0.013} &  0.04 $\pm$ {\scriptsize0.005} &  0.037 $\pm$0.004 {\scriptsize} &  0.08 $\pm$ {\scriptsize0.005}\\ 
\midrule
\bottomrule
\end{tabular}
}
\caption{Final distance to goal for the offline GCRL benchmark with 10\% expert data. Results are averaged over 10 seeds.}
\label{tab:final_distance_to_goal}
\end{table}

\begin{table}
\centering
\resizebox{0.6\textwidth}{!}{
\begin{tabular}{l|c|c|c}
\toprule
\multicolumn{1}{c|}{\textbf{Task}}
& SMORe
& SMORe w/o HER & GoFAR\\ 
\midrule  
FetchReach &35.08 $\pm$ {\scriptsize0.54}&34.86$\pm$ {\scriptsize1.03} &28.2$\pm${\scriptsize0.61}\\ 
SawyerReach&37.67$\pm$ {\scriptsize0.12}&37.34$\pm$ {\scriptsize0.36}&15.34$\pm${\scriptsize0.64}\\
SawyerDoor&31.48$\pm$ {\scriptsize0.46}&31.53$\pm$ {\scriptsize0.62} &18.94$\pm${\scriptsize0.01}\\
FetchPick   &26.47$\pm$ {\scriptsize0.34}&25.72$\pm$ {\scriptsize3.88}&19.7$\pm${\scriptsize2.57}\\
FetchPush&26.83 $\pm$ {\scriptsize1.21}&25.62$\pm$ {\scriptsize1.67}&18.2$\pm${\scriptsize3.00}\\
FetchSlide&4.99$\pm$ {\scriptsize0.40}&4.09$\pm$ {\scriptsize0.33}&2.47$\pm${\scriptsize1.44}\\
\midrule
\bottomrule
\end{tabular}
}
\caption{\reb{Performance gains using HER (resampling ratio=0.8) on \scogo. All other hyperparameters are kept the same between SMORe and SMORe w/o HER.}}
\label{table:smore_without HER}
\end{table}

\begin{table}
\centering
\resizebox{0.95\textwidth}{!}{
\begin{tabular}{l|c|c|c|c|c}
\toprule
\multicolumn{1}{c|}{\textbf{Task}}
& SMORe
& GoFAR (discriminator) & GoFAR (sparse reward) & r-GoFAR (sparse reward) & GoFAR (expectile loss+AWR) \\ 
\midrule  
FetchReach &35.08 $\pm$ {\scriptsize0.54}&28.2$\pm${\scriptsize0.61} &26.1$\pm${\scriptsize1.14}&0.30$\pm${\scriptsize0.43}&26.90$\pm${\scriptsize0.41}\\ 
SawyerReach&37.67$\pm$ {\scriptsize0.12}&15.34$\pm${\scriptsize0.64}&---&0.34$\pm${\scriptsize0.33}&16.17$\pm${\scriptsize3.02}\\
SawyerDoor&31.48$\pm$ {\scriptsize0.46}&18.94$\pm${\scriptsize0.01}&---&10.36$\pm${\scriptsize3.27}&22.47$\pm${\scriptsize1.13}\\
FetchPick   &26.47$\pm$ {\scriptsize0.34}&19.7$\pm${\scriptsize2.57}&17.4$\pm${\scriptsize1.78}&0.25$\pm$ {\scriptsize0.02}&18.46$\pm$ {\scriptsize2.72}\\
FetchPush&26.83 $\pm$ {\scriptsize1.21}&18.2$\pm${\scriptsize3.00}&17.4$\pm${\scriptsize2.67}&4.23$\pm${\scriptsize3.96}&17.39$\pm${\scriptsize5.44}\\
FetchSlide&4.99$\pm$ {\scriptsize0.40}&2.47$\pm${\scriptsize1.44}&5.13$\pm${\scriptsize4.05}&0.29$\pm${\scriptsize0.03}&3.59$\pm${\scriptsize2.30}\\
\midrule
\bottomrule
\end{tabular}
}
\caption{\reb{Ablation comparison with GoFAR: a) a sparse binary reward is used in place of a learned discriminator in GoFAR b) The policy and value update is replaced by AWR and expectile loss respectively. The main difference with SMORe remains the use of discriminator for training. --- indicates these environments were not considered in~\citep{ma2022far}. Our reproduced results (denoted by r-) with the official code base for binary results did not match up to the reported results.}}
\label{table:gofar_without_reward}
\end{table}

\begin{table}
\centering
\resizebox{0.55\textwidth}{!}{
\begin{tabular}{l|c|c|c|c}
\toprule
\multicolumn{1}{c|}{\textbf{Task}}
& SMORe
& GoFAR  & WGCSL & GC-IQL \\ 
\midrule  
FetchReach & 35.01$\pm$ {\scriptsize0.47}&27.37$\pm${\scriptsize0.4} &21.65$\pm${\scriptsize0.61}&23.72$\pm${\scriptsize1.18}\\ 
SawyerReach   &36.26$\pm$ {\scriptsize0.93}&5.89$\pm${\scriptsize1.36}&7.27$\pm${\scriptsize1.14}&33.08$\pm${\scriptsize0.81}\\
SawyerDoor   &20.28$\pm$ {\scriptsize2.65}&15.33$\pm${\scriptsize1.30}&13.81$\pm${\scriptsize2.72}&16.05$\pm${\scriptsize4.97}\\
FetchPick   &0.61$\pm$ {\scriptsize0.5}&0.0$\pm${\scriptsize0.0}&0.0$\pm${\scriptsize0.0}&1.31$\pm${\scriptsize1.86}\\
FetchPush& 6.39$\pm$ {\scriptsize0.68}&4.23$\pm${\scriptsize3.96}&4.27$\pm${\scriptsize3.9}&2.63$\pm${\scriptsize1.68}\\
FetchSlide&0.42$\pm$0.01 {\scriptsize}&0.059$\pm${\scriptsize0.08}&0.93$\pm${\scriptsize0.69}&0.75$\pm${\scriptsize0.58}\\
\midrule
\bottomrule
\end{tabular}
}
\caption{\reb{Discounted Return for the offline GCRL benchmark with 0\% expert data. Results are averaged over 10 seeds.}}
\label{table:gofar_without_expert_data}
\end{table}

\begin{table}
\centering
\resizebox{0.7\textwidth}{!}{
\begin{tabular}{l|c|c|c|c|c}
\toprule
\multicolumn{1}{c|}{\textbf{Task}}
& SMORe
& GC-IQL & GC-SQL & GC-EQL & GC-XQL\\ 
\midrule  
FetchReach &35.08 $\pm$ {\scriptsize0.54}&34.43$\pm$ {\scriptsize1.00} &35.67$\pm${\scriptsize0.70}&29.23$\pm${\scriptsize0.2}&33.94$\pm${\scriptsize0.49}\\ 
SawyerReach&37.67$\pm$ {\scriptsize0.12}&35.18$\pm$ {\scriptsize0.29}&37.10$\pm${\scriptsize0.24}&30.19$\pm${\scriptsize1.66}&32.88$\pm${\scriptsize2.85}\\
SawyerDoor&31.48$\pm$ {\scriptsize0.46}&25.52$\pm$ {\scriptsize1.45} &27.96$\pm${\scriptsize0.45}&3.57$\pm${\scriptsize3.51}&5.85$\pm${\scriptsize4.21}\\
FetchPick   &26.47$\pm$ {\scriptsize0.34}&16.8$\pm$ {\scriptsize3.10}&18.35$\pm${\scriptsize6.67}&1.31$\pm${\scriptsize1.86}&1.31$\pm${\scriptsize1.82}\\
FetchPush&26.83 $\pm$ {\scriptsize1.21}&22.40$\pm$ {\scriptsize0.74}&17.19$\pm${\scriptsize2.56}&2.64$\pm${\scriptsize1.30}&3.79$\pm${\scriptsize0.21}\\
FetchSlide&4.99$\pm$ {\scriptsize0.40}&4.80$\pm$ {\scriptsize1.59}&4.68$\pm${\scriptsize3.32}&0.06$\pm${\scriptsize0.08}&0.36$\pm${\scriptsize0.52}\\
\midrule
\bottomrule
\end{tabular}
}
\caption{\reb{Comparison of SMORe to in-sample RL methods - IQL~\citep{kostrikov2021offline},SQL/EQL~\citep{xu2023offline}, XQL~\citep{garg2023extreme} that learn from sparse rewards. --- in EQL denotes the learning diverged. We observed IQL to be the most stable alternative compared to SQL, EQL and XQL. SQL, EQL and XQL were implemented using author's official codebase}}
\label{table:smore_in_sample_rl}
\end{table}

\begin{table}
\centering
\resizebox{0.75\textwidth}{!}{
\begin{tabular}{l|c|c|c}
\toprule
\multicolumn{1}{c|}{\textbf{Task}}
& SMORe
& SMORe (w/o AWR) & SMORe (w/o AWR and Expectile loss)\\ 
\midrule  
FetchReach &35.08 $\pm$ {\scriptsize0.54}&0.30$\pm${\scriptsize0.29}&0.10$\pm${\scriptsize0.13}\\   
SawyerReach &36.26 $\pm$ {\scriptsize0.93}&29.31$\pm${\scriptsize0.53}&29.64$\pm${\scriptsize0.62}\\  
SawyerDoor &20.28 $\pm$ {\scriptsize2.65}&5.06$\pm${\scriptsize0.52}&2.11$\pm${\scriptsize1.59}\\ 
FetchPick   &26.47$\pm$ {\scriptsize0.34}& 1.79$\pm${\scriptsize0.65}&1.77$\pm${\scriptsize1.51}\\
FetchPush&26.83 $\pm$ {\scriptsize1.21}&4.60$\pm${\scriptsize2.51}&2.69$\pm${\scriptsize1.01}\\
FetchSlide&4.99$\pm$ {\scriptsize0.40}&0.22$\pm${\scriptsize0.33}&0.50$\pm${\scriptsize0.02}\\
\midrule
\bottomrule
\end{tabular}
}
\caption{\reb{Ablating practical components of SMORe. Without adapting for offline setting we consider in this work by using in-sample maximization or constrained policy optimization using AWR the performance degrades as expected. Without in-sample-maximization value function explodes in the offline setting and using policy that maximizes Q can often select OOD actions leading to poor performance.}}
\label{table:smore_without_practical_changes}
\end{table}


\end{document}